\definecolor{darkred}{RGB}{150,0,0}
\definecolor{darkgreen}{RGB}{0,150,0}
\definecolor{darkblue}{RGB}{0,0,150}
\DeclareMathOperator*{\argmin}{arg\,min}
\newtheorem{lemma}{Lemma}
\newtheorem{theorem}{Theorem}
\newtheorem{myassum}{Assumption}
\newtheorem{remark}{Remark}
\newcommand*{\affaddr}[1]{#1} 
\newcommand*{\affmark}[1][*]{\textsuperscript{#1}}
\newcommand*{\email}[1]{\texttt{#1}}
\title{Linear Stochastic Bandits Under Safety Constraints}
\newcommand{\Dts}{\Dc_t^{\text{s}}}
\newcommand{\Ds}{\Dc_0^{\text{s}}}
\newcommand{\Dtss}{{\widetilde{\Dc}}_t^{\text{s}}}
\newcommand{\Ats}{\Ac_t^{\text{s}}}
\newcommand{\SUCB}{Safe-LUCB}
\newcommand{\Dw}{\Dc^w}
\newcommand{\simiid}{\stackrel{\rm iid}{\sim}}
\newcommand{\lamin}{\la_{\rm \min}}
\newcommand{\lamax}{\la_{\rm \max}}
\newcommand{\la}{\lambda}
\newcommand{\R}{\mathbb{R}}
\newcommand{\nn}{\nonumber}
\newcommand{\Otilde}{\widetilde{\mathcal{O}}}
\newcommand{\eps}{\epsilon}
\newcommand{\bal}{\begin{align}}
\newcommand{\eal}{\end{align}}
\DeclarePairedDelimiterX{\inp}[2]{\langle}{\rangle}{#1, #2}
\newcommand{\Sc}{{\mathcal{S}}}
\newcommand{\Bc}{{\mathcal{B}}}
\newcommand{\Dc}{\mathcal{D}}
\newcommand{\Cc}{\mathcal{C}}
\newcommand{\Ac}{\mathcal{A}}
\newcommand{\Ec}{\mathcal{E}}
\newcommand{\Oc}{\mathcal{O}}
\newcommand{\beq}{\begin{equation}}
\newcommand{\eeq}{\end{equation}}
\newcommand{\bea}{\begin{align}}
\newcommand{\eea}{\end{align}}
\newcommand{\E}{\mathbb{E}}
\author{%
Sanae Amani\affmark[1], Mahnoosh Alizadeh\affmark[1], and  Christos Thrampoulidis\affmark[1]\\
\affaddr{\affmark[1] University of California, Santa Barbara}\\
\email{\{samanigeshnigani,alizadeh,cthrampo\}@ucsb.edu}\\
}
\begin{document}

\date{}
\maketitle
\begin{abstract}
Bandit algorithms have various application in safety-critical systems, where it is important to respect the system constraints that rely on the bandit's unknown parameters at every round. In this paper, we formulate a linear stochastic multi-armed bandit problem with safety constraints that depend (linearly) on an unknown parameter vector. As such, the learner is unable to identify all safe actions and must act conservatively in ensuring that her actions satisfy the safety constraint at all rounds (at least with high probability). For these bandits, we propose a new UCB-based algorithm called Safe-LUCB, which includes necessary modifications to respect safety constraints. The algorithm has two phases. During the pure exploration phase the learner chooses her actions at random from a restricted set of safe actions with the goal of learning a good approximation of the entire unknown safe set. Once this goal is achieved, the algorithm begins a safe exploration-exploitation phase where the learner gradually expands their estimate of the set of safe actions while controlling the growth of regret. We provide a general regret bound for the algorithm, as well as a problem dependent bound that is connected to the location of the optimal action within the safe set. We then propose a modified heuristic that exploits our problem dependent analysis to improve the regret.

\end{abstract}

\section{Introduction}
The stochastic multi-armed bandit (MAB) problem is a sequential decision-making problem where, at each step  of a $T$-period run, a learner plays one of $k$ arms and observes a corresponding loss that is sampled independently from  an underlying distribution with unknown parameters. The learner's goal is to minimize the pseudo-regret, i.e., the
difference between the expected $T$-period loss incurred by the  decision making algorithm
and the optimal loss if the unknown parameters were given.
The linear stochastic bandit problem generalizes MAB to the setting where each arm is associated with a feature vector $x$  and the expected loss of each arm is equal to the inner product of its feature vector $x$ and an unknown parameter vector $\mu$.  There are several variants of linear stochastic bandits that consider finite or infinite number of arms, as well as the case where the set of feature vectors changes over time. A detailed account of previous work in this area will be provided in Section \ref{lit.review}.

Bandit algorithms have found many applications in systems that repeatedly deal with unknown stochastic environments (such as humans) and seek to optimize a long-term reward by simultaneously learning and exploiting the unknown environment (e.g., ad display optimization algorithms with unknown user preferences, path routing, ranking in search engines). They are also naturally relevant for many cyber-physical systems with humans in the loop (e.g., pricing end-use demand in societal-scale infrastructure systems such as power grids or transportation networks to minimize system costs given the limited number of user interactions possible). However, existing bandit heuristics might not be directly applicable in these latter cases.  One critical reason is the existence of safety guarantees that have to be met at every single round. For example, when managing demand to minimize costs in a power system, it is required that the operational constraints of the power grid are not violated in response to our actions (these can be formulated as linear constraints that depend on the demand). Thus, for such systems, it becomes important to develop new bandit algorithms that account for  critical safety requirements.

%
Given the high level of uncertainty about the system parameters in the initial rounds, any such bandit algorithm will be initially highly constrained in terms of safe actions that can be chosen. However, as further samples are obtained and the algorithm becomes more confident about the value of the unknown parameters, it is intuitive that safe actions become easier to distinguish and it seems plausible that the effect of the system safety requirements on the growth of regret can be diminished.

In this paper, we formulate a variant of linear stochastic bandits where at each round $t$, the learner's choice of arm should also satisfy a  {\it safety constraint} that is dependent on the unknown parameter vector $\mu$. While the formulation presented is certainly an abstraction of the complications that might arise in the systems discussed above, we believe that it is a natural first step towards understanding and evaluating the effect of safety constraints on the performance of bandit heuristics.

Specifically, we assume that the learner's goal is twofold: 1) Minimize the $T$-period cumulative pseudo-regret; 2) Ensure that a linear side constraint of the form $\mu^\dagger B x \leq c$ is respected at every round during the $T$-period run of the algorithm, where $B$ and $c$ are known. See Section \ref{sec:form} for details. Given the learner's uncertainty about $\mu$, the existence of this safety constraint effectively restricts the learner's choice of actions to what we will refer to as the {\it safe decision set} at each round $t$. To tackle this constraint, in Section \ref{sec:algo}, we present  \SUCB~as a safe version of the standard linear UCB (LUCB) algorithm \cite{Dani08stochasticlinear,abbasi2011improved,Tsitsiklis}. In Section \ref{sec:theory} we provide general regret bounds that characterize the effect of safety constraints on regret. We show that  the regret of the modified algorithm is dependent on the parameter $\Delta = c -  \mu^\dagger B x^*,$ where $x^*$ denotes the optimal safe action given $\mu$. When $\Delta>0$ and is known to the learner, we show that the regret of \SUCB~is $\Otilde(\sqrt{T})$; thus, the effect of the system safety requirements on the growth of regret can be diminished (for large enough $T$). In Section \ref{sec:GSUCB}, we also present a heuristic modification of \SUCB~that empirically approaches the same regret without a-priori knowledge of the value of $\Delta$. On the other hand, when $\Delta=0$, the regret of \SUCB~is $\Otilde(T^{2/3})$. Technical proofs and some further discussions are deferred to the appendix provided in the supplementary material.

%

\noindent{\bf Notation.} The Euclidean norm of a vector $x$ is denoted by $\| x\|_2$ and the spectral norm of a matrix $M$ is denoted by $\|M\|$. We denote the transpose of any column vector $x$ by $x^{\dagger}$. Let $A$ be a positive definite $d\times d$ matrix and $v\in \mathbb R^d$. The weighted 2-norm of $v$ with respect to $A$ is defined by $\|v\|_A = \sqrt{v^\dagger A v}$. We denote the minimum and maximum eigenvalue of $A$ by $\lamin(A)$ and $\lamax(A)$. The maximum of two numbers $\alpha, \beta$ is denoted $\alpha\vee\beta$. For a positive integer $n$, $[n]$ denotes the set $\{1,2,\ldots,n\}$. 
Finally, we use standard $\Otilde$ notation for big-O notation that ignores logarithmic factors.

\subsection{Safe linear stochastic bandit problem}\label{sec:form}

\vspace{-0.1cm}
\noindent{\bf Cost model.}
 The learner is given a  convex compact decision set $\Dc_0 \subset \mathbb R^d$. 
 At each round $t$,  the learner chooses an action $x_t \in \Dc_0$ which results in an observed loss $\ell_t$ that is linear on the unknown parameter $\mu$ with additive random noise $\eta_t$, i.e.,
$\ell_t := c_t(x_t) := \mu^{\dagger}x_t + \eta_t.$

\noindent{\bf Safety Constraint.} The learning environment is subject to a side constraint that restricts the choice of actions by dividing $\Dc_0$ into a safe and an unsafe set. The learner is restricted to actions $x_t$ from the \emph{safe set} $\Ds(\mu)$.  As  notation suggests, the safe set depends on the unknown parameter. Since $\mu$ is unknown, the learner is unable to identify the safe set and must act conservatively in ensuring that actions $x_t$ are feasible for all $t$. 
In this paper, we assume that $\Ds(\mu)$ is defined via a linear constraint
\begin{equation} \label{eq:constraint}
      \mu^\dagger B x_t \leq c,
\end{equation}
which needs to be satisfied by $x_t$ at all rounds $t$ with high probability. Thus, $\Ds(\mu)$ is defined as,
\begin{align}\label{eq:Ds}
    \Ds(\mu):=\{x\in \Dc_0\,:\, \mu^\dagger B x \leq c\}.
\end{align}
The matrix $B \in \mathbb{R}^{d \times d}$ and the positive constant $c>0$ are known to the learner. However, after playing any action $x_t$, the value $\mu^\dagger B x_t$ is \emph{not} observed by the learner. When clear from context, we  drop the argument $\mu$ in the definition of the safe set and simply refer to it as $\Ds$.


%

\noindent{\bf Regret.} Let $T$ be the total number of rounds. 
If $x_t,~t\in[T]$ are the actions chosen, then the \textit{cumulative pseudo-regret} (\cite{audibert2009exploration}) of the learner's algorithm for choosing the actions $x_t$ is defined by
    $R_T = \sum_{t=1} ^T \mu^\dagger x_t - \mu^\dagger x^*,$  
where $x^*$ is the optimal \emph{safe} action that minimizes the loss $\ell_t$ in expectation, i.e.,
$ x^* \in \argmin_{x\in \Ds(\mu)}\mu^\dagger x.$

\noindent{\bf Goal.} The goal of the learner is to keep $R_T$ as small as possible. At the bare minimum, we
require that the algorithm leads to $R_T/T\rightarrow 0$ (as $T$ grows large). In contrast to existing linear stochastic bandit formulations, we require that the chosen actions $x_t, t\in[T]$ are safe (i.e., belong in $\Ds$ \eqref{eq:Ds}) with high probability. For the rest of this paper, we simply use regret to refer to the pseudo-regret $R_T$. 

In Section  \ref{sec:ass} we place some further technical assumptions on $\Dc_0$ (bounded), on $\Ds$ (non-empty), on $\mu$ (bounded) and on the distribution of $\eta_t$ (subgaussian). 

\vspace{-0cm}
\subsection{Related Works}\label{lit.review}
\vspace{-0cm}
Our algorithm relies on a modified version of the famous UCB alogithm known as UCB1, which was first developed by \cite{Auer}. For linear stochastic bandits, the regret of 
the LUCB algorithm was analyzed by, e.g., \cite{Dani08stochasticlinear,abbasi2011improved,Tsitsiklis,russo,pmlr-v15-chu11a} and it was shown that the regret grows at the rate of $\sqrt{T} \log(T)$. Extensions to generalized linear bandit models have also been considered by, e.g., \cite{filippi2010parametric,li2017provably}.
There are two different contexts where constraints have been applied to the stochastic MAB problem. 
The first line of work considers the MAB problem with  global budget (a.k.a. knapsack) constraints where each arm is associated with a random resource consumption and the objective is to maximize the total reward  before the learner runs out of resources, see, e.g.,  \cite{knap1,knap2,knap3,knap4}. 
The second line of work   considers stage-wise safety  for bandit problems in the context of ensuring that the algorithm's regret performance stays above a fixed percentage of the performance
of a baseline strategy at every round during its run \cite{vanroy,wu}. In \cite{vanroy}, 
which is most closely related to our setting,
 the authors study a variant of LUCB in which the chosen actions are constrained such that the \emph{cumulative} reward remains \emph{strictly} greater than $(1-\alpha)$ times a given baseline reward for all  $t$. In both of the above mentioned lines of work, the   constraint   applies to the cumulative resource consumption (or reward) across the entire run of the algorithm. As such, the set of permitted actions at each round
 vary depending on the round and on the history of the algorithm.
 This is unlike our constraint, which is applied at each individual round, is deterministic, and does \emph{not} depend on the history of past actions.

In a more general context, the concept of safe learning has received significant attention in recent years from different communities. 
Most existing work that consider mechanisms for {\it safe exploration}  in unknown and stochastic environments are in  reinforcement learning or control. However, the notion of safety has many diverse definitions in this literature.  For example,  \cite{moldovan2012safe} proposes an algorithm that allows safe exploration in Markov Decision Processes (MDP) in order to avoid fatal absorbing states  that must never be visited during the exploration process. By considering constrained MDPs that are augmented with a set of auxiliary cost functions and replacing them with surrogates that are easy to estimate,  \cite{achiam2017constrained} purposes a policy search algorithm for constrained
reinforcement learning with guarantees for near constraint satisfaction at each iteration.
In the framework of global optimization or active data selection, \cite{GP1,berkenkamp2016bayesian} assume that the underlying system is safety-critical and present   active learning frameworks that use Gaussian Processes (GP) as non-parametric models to learn the safe decision set. More closely related to our setting,
\cite{Krause,stagewise}  extend the application of UCB to \emph{nonlinear} bandits with nonlinear constraints modeled through Gaussian processes (GPs). The algorithms in \cite{Krause,stagewise}  come with convergence guarantees, but \emph{no} regret bounds as provided in our paper. Regret guarantees imply convergence guarantees from an optimization perspective (see \cite{srinivasgaussian}), \emph{but not the other way around}. Such approaches for safety-constrained optimization using GPs have shown great promise in robotics applications with safety constraints \cite{ostafew2016robust,7039601}.
With a control theoretic point of view, \cite{tomlin2} combines reachability analysis and machine learning for autonomously
learning the dynamics of a target vehicle and \cite{tomlin} designs a learning-based MPC scheme that provides
deterministic guarantees on robustness when the underlying system model
is linear and has a known level of uncertainty. In a very recent related work \cite{kamgar}, the authors propose and analyze a (safe) variant of the Frank-Wolfe algorithm to solve a smooth optimization problem with unknown linear constraints that are accessed by the learner via stochastic zeroth-order feedback. The main goal in \cite{kamgar} is to provide a convergence rate for more general convex objective, whereas we aim to provide \emph{regret bounds} for a linear but otherwise unknown objective.




%
\vspace{-0.1cm}
\section{A  \SUCB~Algorithm}\label{sec:algo}
\vspace{-0.2cm}
Our proposed algorithm is a safe version of LUCB. As such, it relies on the well-known heuristic principle of \textit{optimism in the face of uncertainty} (OFU).
%
The algorithm constructs a confidence set $\Cc_t$ at each round $t$, within which  the unknown parameter $\mu$ lies with high probability. In the absence of any constraints, the  learner chooses the most ``favorable'' environment $\mu$ from the set $\Cc_t$ and plays the action $x_t$ that minimizes the expected loss in that environment. \
However, the presence of the constraint \eqref{eq:constraint}
complicates the choice of the learner. To address this, we propose an algorithm called \emph{safe linear upper confidence bound} (\SUCB), which attempts to minimize regret while making sure that the safety constraints \eqref{eq:constraint} are satisfied.
\SUCB~is summarized in Algorithm \ref{algo:Safe-LUCB} and a detailed presentation follows in Sections   \ref{sec:Phase I} and \ref{sec:PhaseII}, where we discuss the \emph{pure-exploration} and \emph{safe exploration-exploitation} phases  of the algorithm, respectively.  Before these, in Section \ref{sec:ass} we introduce the necessary conditions under which our proposed algorithm operates and achieves good regret bounds as will be shown in Section \ref{sec:theory}.

\subsection{Model Assumptions}\label{sec:ass}
Let $\mathcal{F}_t = \sigma(x_1,x_2,\ldots,x_{t+1},\eta_1,\eta_2,\ldots,\eta_{t})$ be the $\sigma$-algebra (or, history) at round $t$. 
We make the following standard assumptions on the noise distribution, on the parameter $\mu$ and on the actions. 
 
 \begin{myassum}[Subgaussian Noise] \label{assum:noise}
For all $t$, $\eta_t$ is conditionally zero-mean $R$-sub-Gaussian for fixed constant $R\geq 0$, i.e., $\E[\eta_t\,|\,x_{1:t},\eta_{1:t-1}]=0$ and 
$\mathbb{E}[e^{\lambda \eta_t}\,|\,\mathcal{F}_{t-1}] \leq \exp(\lambda^2 R^2/2), \quad \forall \lambda \in \mathbb{R}.$
 \end{myassum}

\begin{myassum}[Boundedness]\label{assum:bounded}
There exist positive constants $S,L$ such that $\|\mu\|_2\leq S$ and $\|x\|_2 \leq L, \forall x \in \Dc_0$. Also,  $\mu^\dagger x \in [-1,1], \forall x \in \Dc_0$ . 
\end{myassum}
\vspace{-0.2cm}
In order to avoid trivialities, we also make the following assumption. This, together with the assumption that $c>0$ in \eqref{eq:constraint}, guarantee that the safe set  $\Ds(\mu)$ is non-empty (for every $\mu$). 
\begin{myassum}[Non-empty safe set] \label{assum:nonempty}
The decision set $\Dc_0$ is a convex body in $\R^d$ that contains the origin in its interior.
\end{myassum}

\begin{algorithm}
\caption{Safe-LUCB } \label{algo:Safe-LUCB} 

\begin{algorithmic}[1]
\State \textbf{Pure exploration phase:}
\State \textbf{for} $t = 1,2,\ldots,T'$
\State \quad Randomly choose $x_{t} \in \Dw$ (defined in \eqref{warmup set}) and observe loss $\ell_t = c_t(x_{t})$. 
\State \textbf{end for}
\State \textbf{Safe exploration-exploitation phase:}
\State \textbf{for} $t = T'+1,2,\ldots,T$ \label{line6}
\State \: \: Set $A_{t} = \lambda I+\sum_{\tau=1}^{t-1} x_{\tau}x_{\tau}^\dagger$ and compute $\hat\mu_{t} = A_{t}^{-1}\sum_{\tau=1}^{t-1} \ell_\tau x_\tau$
\State \: \: $\Cc_t = \{v \in \mathbb R^d:\|v-\hat \mu _t\|_{A_t}\leq \beta_t\}$ and $\beta_t$ chosen as in \eqref{eq:beta_t} \label{lineCt}
\State \: \: $\Dts = \{x\in \Dc_0: v^\dagger B x \leq c, \forall v\in \Cc_t\} $
\State \: \: $x_t = \argmin_{x\in \Dts} \min_{v \in \Cc_t} v^\dagger x$ \label{algo:x_t}
\State \quad  Choose $x_t$ and observe loss $\ell_t = c_t(x_t)$.
\State \textbf{end for} \label{line13}
\end{algorithmic}
\end{algorithm}

\subsection{Pure exploration phase}\label{sec:Phase I}
The pure exploration phase of the algorithm runs for rounds $t\in[T']$, where  $T'$ is passed as input to the algorithm. In Section \ref{sec:theory}, we will show how to  appropriately choose its value to guarantee that the cumulative regret is controlled. During this phase, the algorithm selects random actions from a safe subset $\Dw\subset\Dc_0$ that we define next. For every chosen action $x_t$, we observe a loss $\ell_t$. The collected action-loss pairs $(x_t,\ell_t)$ over the $T'$ rounds are used in the second phase to obtain a good estimate of $\mu$. We will see in Section \ref{sec:PhaseII} that this is important since the quality of the estimate of $\mu$ determines our belief of which actions are safe. Now, let us define the safe subset $\Dw$. 

The safe set $\Ds$ is unknown to the learner (since $\mu$ is unknown). However, it can be deduced from the constraint \eqref{eq:constraint} and the boundedness Assumption \ref{assum:bounded} on $\mu$, that the following subset $\Dw\subset\Dc_0$ is safe:
\begin{equation}\label{warmup set}
     \Dw:=
    \{x\in \Dc_0: \max_{\|v\|_2\leq S }v^\dagger B x \leq c\} =\{x\in \Dc_0:  \|B x\|_2 \leq {c}/{S}\}.  
\end{equation}
Note that the set $\Dw$ is only a conservative (inner) approximation of $\Ds$, but this is inevitable, since  the learner has not yet collected enough information on the unknown parameter $\mu$.

In order to make the choice of random actions $x_t,t\in[T']$ concrete, let $X\sim\text{Unif}(\Dw)$ be a d-dimensional random vector uniformly distributed in $\Dw$ according to the probability measure given by the normalized volume in $\Dw$ (recall that $\Dw$ is a convex body by Assumption \ref{assum:nonempty}). During rounds $t\in[T']$, Safe-LUCB chooses safe IID actions $x_t\simiid X$. 
For future reference, we denote the covariance matrix of $X$ by $\Sigma=\E[XX^\dagger]$ and its minimum eigenvalue by
\begin{align}\label{eq:la_-}
\la_-:=\lamin(\Sigma)>0.
\end{align}
\begin{remark} \label{remark:example}
Since $\Dc_0$ is compact with zero in its interior, we can always find $0<\eps\leq C/S$ such that
\begin{align}\label{eq:epsDw}
\widetilde{\Dw} := \{ x\in\R^{d} \,|\, \|Bx\|_2 = \eps \} \subset \Dw.
\end{align}
Thus, an effective way to choose (random) actions $x_t$ during the safe-exploration phase for which an explicit expression for $\lambda_-$ is easily derived, is as follows. For simplicity, we assume $B$ is invertible.
 Let $\eps$ be the largest value $0<\eps\leq c/S$ such that \eqref{eq:epsDw} holds. Then, generate samples $x_t\sim {\rm Unif(\widetilde{\Dw})},  t=1,\ldots,T'$, by choosing $x_t=\eps B^{-1} z_t$, where $z_t$ are iid samples on the unit sphere $\Sc^{d-1}$. Clearly, $\E[z_tz_t^\dagger] = \frac{1}{d} I$. Thus,
%
$
    \Sigma:= \E[x_tx_t^\dagger] = \frac{\eps^2}{d}\left(B^\dagger B\right)^{-1},
    $
from which it follows that
$
    \lambda_- := \lamin(\Sigma)= \frac{\eps}{d\,\lamax\left(B^\dagger B\right)} = \frac{\eps^2}{d\|B\|^2}.
    $
\end{remark}
%


\subsection{Safe exploration-exploitation phase}\label{sec:PhaseII}
We  implement the OFU principle \emph{while respecting the safety constraints}. First, at each $t=T'+1,T'+2\ldots,T$, the algorithm uses the previous action-observation pairs and obtains a $\lambda$-regularized least-squares estimate $\hat\mu_t$ of $\mu$ with regularization parameter $\la>0$ as follows:
\begin{equation}\nn
\hat\mu_t = A_t^{-1}\sum_{\tau=1}^{t-1} \ell_\tau x_\tau,~{\text{where}}~ A_t =\la I+\sum_{\tau = 1}^{t-1} x_{\tau}x_{\tau}^\dagger.
\end{equation}
Then, based on $\hat\mu_t$ the algorithm builds a \emph{confidence set}
    \begin{equation}\label{eq:C_t}
    \Cc_t := \{v \in \mathbb R^d:\|v-\hat \mu _t\|_{A_t}\leq \beta_t\}, 
    \end{equation}
where, $\beta_t$ is chosen according to Theorem \ref{thm:confidence_ball} below (\cite{abbasi2011improved}) to guarantee that $\mu\in\Cc_t$ with high probability.

\begin{theorem}[Confidence Region, \cite{abbasi2011improved}]\label{thm:confidence_ball} 
Let Assumptions \ref{assum:noise} and \ref{assum:bounded} hold. Fix any $\delta\in(0,1)$ and let $\beta_t$ in \eqref{eq:C_t} be chosen as follows,
\begin{align}\label{eq:beta_t}
\beta_t = R\sqrt{d\log\left(\frac{1+(t-1)L^2/\lambda}{\delta}\right)}+\lambda^{1/2}S,\qquad\text{for all}\quad t>0.
\end{align}
Then, with probability at least $1-\delta$, for all $t>0$, it holds that $\mu\in\Cc_t$.
\end{theorem}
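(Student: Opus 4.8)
The statement to prove is Theorem~\ref{thm:confidence_ball}, the standard self-normalized confidence ellipsoid of \cite{abbasi2011improved}, so the plan is to reproduce that argument adapted to the present notation. The plan is to derive an upper bound on $\|\hat\mu_t - \mu\|_{A_t}$ and show it never exceeds $\beta_t$ with probability at least $1-\delta$ simultaneously over all $t$. First I would write the least-squares error explicitly: since $\ell_\tau = \mu^\dagger x_\tau + \eta_\tau$, we have $\hat\mu_t = A_t^{-1}\sum_{\tau=1}^{t-1}(\mu^\dagger x_\tau + \eta_\tau)x_\tau = A_t^{-1}\big(\sum_{\tau=1}^{t-1} x_\tau x_\tau^\dagger\big)\mu + A_t^{-1}\sum_{\tau=1}^{t-1}\eta_\tau x_\tau$. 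Using $\sum_\tau x_\tau x_\tau^\dagger = A_t - \lambda I$, this gives $\hat\mu_t - \mu = A_t^{-1} S_t - \lambda A_t^{-1}\mu$, where $S_t := \sum_{\tau=1}^{t-1}\eta_\tau x_\tau$. Hence by the triangle inequality for $\|\cdot\|_{A_t}$,
\begin{align}\nn
\|\hat\mu_t - \mu\|_{A_t} \leq \|A_t^{-1}S_t\|_{A_t} + \lambda\|A_t^{-1}\mu\|_{A_t} = \|S_t\|_{A_t^{-1}} + \lambda\|\mu\|_{A_t^{-1}} \leq \|S_t\|_{A_t^{-1}} + \lambda^{1/2}\|\mu\|_2,
\end{align}
where the last step uses $A_t \succeq \lambda I$, hence $A_t^{-1} \preceq \lambda^{-1} I$, and then Assumption~\ref{assum:bounded}. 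The term $\lambda^{1/2}\|\mu\|_2 \leq \lambda^{1/2}S$ is exactly the deterministic second term of $\beta_t$.

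The core of the argument is bounding the stochastic term $\|S_t\|_{A_t^{-1}}$. Here I would invoke the self-normalized martingale tail inequality of \cite{abbasi2011improved}: because $\{\eta_\tau\}$ is a conditionally $R$-sub-Gaussian sequence adapted to $\{\mathcal{F}_\tau\}$ and $x_\tau$ is $\mathcal{F}_{\tau-1}$-measurable, the process $M_\lambda^t = \exp\big(\lambda^\dagger S_t - \tfrac{R^2}{2}\|\cdot\|\big)$-type mixture super-martingale (the "method of mixtures" with a Gaussian prior on the direction) yields, with probability at least $1-\delta$, simultaneously for all $t \geq 1$,
\begin{align}\nn
\|S_t\|_{A_t^{-1}}^2 \leq 2R^2 \log\left(\frac{\det(A_t)^{1/2}\det(\lambda I)^{-1/2}}{\delta}\right).
\end{align}
I would then control $\det(A_t)$ by the determinant-trace inequality: since $A_t = \lambda I + \sum_{\tau=1}^{t-1} x_\tau x_\tau^\dagger$ and $\|x_\tau\|_2 \leq L$, we get $\operatorname{tr}(A_t) \leq d\lambda + (t-1)L^2$, and by AM-GM on the eigenvalues $\det(A_t) \leq \big(\lambda + (t-1)L^2/d\big)^d$, so $\det(A_t)/\det(\lambda I) \leq \big(1 + (t-1)L^2/(d\lambda)\big)^d$. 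Substituting into the log gives $\|S_t\|_{A_t^{-1}} \leq R\sqrt{d\log\big((1+(t-1)L^2/\lambda)/\delta\big)}$ (absorbing the factor $2$ and the $d$ inside the root appropriately; one checks the constants match the stated $\beta_t$, possibly up to the precise placement of $d$ versus $2$, which is exactly the form quoted in \cite{abbasi2011improved}). Combining with the deterministic term yields $\|\hat\mu_t - \mu\|_{A_t} \leq \beta_t$ for all $t$ on the same probability-$1-\delta$ event, which is precisely $\mu \in \Cc_t$ for all $t$.

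The main obstacle is establishing the self-normalized tail bound itself — that is, the uniform-in-time control of $\|S_t\|_{A_t^{-1}}$. This is the genuinely nontrivial ingredient (it requires the stopping-time/maximal-inequality argument applied to the mixture super-martingale, rather than a naive union bound over $t$, which would lose a $\log t$ factor and fail to give a clean all-$t$ statement). Since this is a verbatim restatement of \cite[Theorem~2]{abbasi2011improved}, in our write-up I would simply cite that result for the martingale step and only spell out the decomposition above plus the determinant-trace estimate, noting that our Assumptions~\ref{assum:noise} and~\ref{assum:bounded} supply exactly the hypotheses (conditional $R$-sub-Gaussianity, $\|\mu\|_2\leq S$, $\|x\|_2\leq L$) that the cited theorem requires. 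The remaining steps are routine norm manipulations and the elementary determinant bound.
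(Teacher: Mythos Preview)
Your proposal is correct and follows exactly the argument of \cite{abbasi2011improved}: the decomposition $\hat\mu_t-\mu = A_t^{-1}S_t - \lambda A_t^{-1}\mu$, the self-normalized tail bound on $\|S_t\|_{A_t^{-1}}$ via the method-of-mixtures super-martingale, and the determinant--trace estimate are precisely the ingredients of that paper's Theorem~2. Note, however, that the present paper does \emph{not} supply its own proof of Theorem~\ref{thm:confidence_ball}; it is stated purely as a citation of \cite{abbasi2011improved} and used as a black box throughout, so there is no ``paper's own proof'' to compare against --- your write-up is simply the underlying argument the paper defers to.
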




The remaining steps of the algorithm also build on existing principles of UCB algorithms. However, here we introduce necessary modifications to account for the safety constraint \eqref{eq:constraint}. Specifically,  we choose the actions with the following two principles.

\noindent{\bf Caution in the face of constraint violation.}
At each round $t$, the algorithm performs conservatively, to ensure that the constraint \eqref{eq:constraint} is satisfied for the chosen action $x_t$. As such, at the beginning of each round $t=T'+1,\ldots,T$, \SUCB~forms the so-called {\it safe decision set}  denoted as $\Dts$:
\begin{equation} \label{eq:safeset}
    \Dts = \{x\in \Dc_0: v^\dagger B x \leq c, \forall v\in \Cc_t \} .
\end{equation}



Recall from Theorem \ref{thm:confidence_ball} that $\mu\in\Cc_t$ with high probability. Thus, $\Dts$  is guaranteed to be a set of safe actions that satisfy \eqref{eq:constraint} with the same probability. On the other hand, note that $\Dts$ is still a conservative inner approximation of $\Cc_t$ (actions in it  are safe for \emph{all} parameter vectors in $\Cc_t$, not only for the true $\mu$).  This (unavoidable) conservative definition of safe decision sets could contribute to the growth of the regret. This is further studied in Section \ref{sec:theory}. 

%

\noindent{\bf Optimism in the face of  uncertainty in cost.} After choosing safe actions randomly at rounds $1,\ldots,T'$, the algorithm creates the safe decision set $\Dts$ at all rounds $t\geq T'+1$, and chooses an action $x_t$ based on the OFU principle. Specifically, a pair $(x_t, \tilde \mu _t)$ is chosen such that
\begin{equation} \label{eq:ofu}
\tilde \mu _t ^ \dagger x_t =  \min_{x\in \Dts , v \in \Cc_t} v^\dagger x.
\end{equation}




%
\section{Regret Analysis of \SUCB~}\label{sec:theory}

\subsection{The regret of safety}\label{sec:regret_of_safety}

In the safe linear bandit problem, the safe set $\Ds$ is not known, since $\mu$ is unknown. Therefore, at each round, the learner chooses actions from a conservative inner approximation of $\Ds$. Intuitively, the better this approximation, the more likely that the optimistic actions of \SUCB~lead to good cumulant regret, ideally of the same order as that of LUCB in the original linear bandit setting.

 A key difference in the analysis of \SUCB ~compared to the classical LUCB is that $x^*$ may not lie within the estimated safe set $\Dts$ at each round.  To see what changes, consider the standard decomposition of the instantaneous regret $r_t,~t=T'+1,\ldots,T$ in two terms as follows (e.g., \cite{Dani08stochasticlinear,abbasi2011improved}):  
\begin{align}
        r_t := \mu^\dagger x_t - \mu^ \dagger x^*
        = \underbrace{\mu^\dagger x_t -\tilde \mu _t ^ \dagger x_t}_{\rm Term~I}\,+\,\underbrace{\tilde \mu _t ^ \dagger x_t - \mu^ \dagger x^*}_{\rm Term~II}, \label{eq:Terms}
    \end{align}
    where, $(\tilde\mu_t,x_t)$ is the optimistic pair, i.e. the solution to the minimization in Step \ref{algo:x_t} of Algorithm \ref{algo:Safe-LUCB}.
    On the one hand, controlling   $\rm Term~I$, is more or less standard and  closely follows previous such bounds on UCB-type algorithms (e.g.,  \cite{abbasi2011improved}); see Appendix \ref{sec:termI} for details.
 On the other hand, controlling $\rm Term~II$, which we call \emph{the regret of safety} is more delicate. This complication lies at the heart of the new formulation with additional safety constraints. When safety constraints are absent, classical LUCB guarantees that $\rm Term~II$ is non-positive. Unfortunately, this is \emph{not} the case here: $x^*$ does \emph{not} necessarily belong to $\Dts$ in \eqref{eq:safeset}, thus $\rm Term~II$ can be positive. This extra regret of safety is the price paid by \SUCB~for choosing safe actions at each round. Our main contribution towards establishing regret guarantees is upper bounding $\rm Term~II$. We show in Section \ref{sec:AT'} that the pure-exploration phase is critical in this direction.

\subsection{Learning the safe set}\label{sec:AT'}
The challenge in controlling the regret of safety 
is that, in general, $\Dts\neq\Ds$.
At a high level, we proceed as follows (see Appendix \ref{sec:termII} for details). First, we relate $\rm Term~II$ with a certain notion of 		``distance'' in the direction of $x^*$ between the estimated set $\Dts$ at rounds $t=T'+1,\ldots,T$ and the true safe set $\Ds$ . Next, we show that this "distance" term can be controlled by appropriately lower bounding the minimum eigenvalue $\lamin(A_t)$ of the Gram matrix $A_t$. Due to the interdependency of the actions $x_t$, it is difficult to directly establish such a lower bound for each round $t$. Instead, we use that $\lamin(A_t)\geq \lamin(A_{T'+1}),~t\geq T'+1$ and we are able to bound $\lamin(A_{T'+1})$ thanks to the pure exploration phase of \SUCB~. 
Hence, the pure exploration phase guarantees that $\Dts$ is a sufficiently good approximation to the true $\Ds$ once the exploration-exploitation phase begins. 
%
%

\begin{lemma}\label{lemm:HT'}
 Let $A_{T'+1} = \la I+\sum_{t=1}^{T'} x_tx_t^\dagger$ be the Gram matrix corresponding to the first $T'$ actions of Safe-LUCB (pure-exploration phase). Recall the definition of $\la_{-}$ in \eqref{eq:la_-}. Then, for any $\delta\in(0,1)$, it holds with probability at least $1-\delta$,
\begin{equation}
    \lamin(A_{T'+1})\geq \la+\frac{\lambda_-T'}{2},
\end{equation}
provided that  $T' \geq t_{\delta}:=\frac{8L^2}{\lambda_-}\log(\frac{d}{\delta})$.
\end{lemma}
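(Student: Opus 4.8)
The plan is to view $A_{T'+1} - \lambda I = \sum_{t=1}^{T'} x_t x_t^\dagger$ as a sum of i.i.d. random positive semidefinite matrices (the pure-exploration actions $x_t \simiid X$ are independent by construction) and to apply a matrix concentration inequality to show it is close to $T' \Sigma$, whose minimum eigenvalue is $\lambda_- T'$. Specifically, I would set $M_t := x_t x_t^\dagger$, so that $\E[M_t] = \Sigma$ and $\|M_t\| = \|x_t\|_2^2 \leq L^2$ almost surely (by the boundedness Assumption \ref{assum:bounded}, since $\Dw \subset \Dc_0$). The goal is to prove that with probability at least $1-\delta$,
\begin{equation}\nn
\lamin\Big(\sum_{t=1}^{T'} M_t\Big) \geq \frac{\lambda_- T'}{2},
\end{equation}
because then $\lamin(A_{T'+1}) = \lamin(\lambda I + \sum_t M_t) \geq \lambda + \frac{\lambda_- T'}{2}$ by Weyl's inequality.

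The main tool is the matrix Chernoff bound (Tropp): for independent PSD matrices $M_t$ with $\lamax(M_t) \leq L^2$ a.s., writing $\mu_{\min} := \lamin(\sum_t \E[M_t]) = \lamin(T' \Sigma) = \lambda_- T'$, one has for any $\theta \in (0,1)$,
\begin{equation}\nn
\Pr\Big[\lamin\Big(\sum_{t=1}^{T'} M_t\Big) \leq (1-\theta)\lambda_- T'\Big] \leq d \cdot \exp\!\left(-\frac{\theta^2 \lambda_- T'}{2 L^2}\right).
\end{equation}
I would take $\theta = 1/2$, which gives a failure probability at most $d\exp(-\lambda_- T'/(8L^2))$. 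Requiring this to be at most $\delta$ is exactly the condition $T' \geq \frac{8L^2}{\lambda_-}\log(d/\delta) = t_\delta$, matching the statement of the lemma. Under this condition the event $\lamin(\sum_t M_t) \geq \frac{1}{2}\lambda_- T'$ holds with probability at least $1-\delta$, and adding back the $\lambda I$ term finishes the proof.

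The only genuine subtlety is justifying the independence needed for the matrix Chernoff bound: during the pure-exploration phase the actions are drawn i.i.d. from $\mathrm{Unif}(\Dw)$ and do \emph{not} depend on the observed losses, so the $x_t$ (hence the $M_t$) are genuinely i.i.d.\ and the standard matrix Chernoff inequality applies without needing a martingale version. A secondary point is that $\Sigma$ is well-defined and $\lambda_- > 0$: this is guaranteed since $\Dw$ is a convex body (Assumption \ref{assum:nonempty} together with the fact that $\Dw$ contains a set of the form $\{x : \|Bx\|_2 = \eps\}$ as in Remark \ref{remark:example}), so the uniform distribution on it is not supported on any proper subspace and its covariance is positive definite. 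If one prefers to avoid quoting Tropp's bound directly, an alternative is to apply a scalar Chernoff/Bernstein bound to $v^\dagger(\sum_t M_t) v$ for a fixed unit vector $v$ and then take a union bound over an $\epsilon$-net of the sphere $\Sc^{d-1}$; this introduces only extra constants and a $\log$ factor absorbed into $t_\delta$, but the matrix Chernoff route is cleaner and gives precisely the stated constants.
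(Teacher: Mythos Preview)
Your proposal is correct and matches the paper's proof essentially line for line: the paper also applies Tropp's matrix Chernoff inequality to the i.i.d.\ PSD matrices $x_t x_t^\dagger$ with $\lamax(x_t x_t^\dagger)\leq L^2$, computes $\mu_{\min}=\lambda_- T'$, and sets the slack parameter to $1/2$ to obtain the threshold $t_\delta = \frac{8L^2}{\lambda_-}\log(d/\delta)$.
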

The proof of the lemma and technical details relating the result to a desired bound on $\rm Term~II$ are deferred to Appendixes \ref{sec:proofHT'} and \ref{sec:termII}, respectively. 

\subsection{Problem dependent upper bound}\label{sec:Delta>0}

In this section, we present a problem-dependent upper bound on the regret of \SUCB~in terms of the following critical parameter, which we call the \emph{safety gap}:
\begin{align}
\Delta := c - \mu ^\dagger B x^*.
\end{align}
Note that $\Delta \geq 0$. In this section, we assume that $\Delta$ is known to the learner. The next lemma shows that if $\Delta>0$ \footnote{We remark that the case $\Delta>0$ studied here is somewhat reminiscent of the assumption $\alpha r_\ell>0$ in \cite{vanroy}.
}
, then choosing $T'=\Oc(\log{T})$ guarantees that $x^*\in\Dts$ for all $t=T'+1,\ldots,T$.

\begin{lemma}[$x^*\in\Dts$]\label{lem:x^*in}
Let Assumptions \ref{assum:noise}, \ref{assum:bounded} and \ref{assum:nonempty} hold. Fix any $\delta\in(0,1)$ and assume a positive safety gap $\Delta>0$. 
Initialize \SUCB~with (recall the definition of $t_\delta$ in Lemma \ref{lemm:HT'})
\begin{align}
T'\geq T_\Delta:=  \bigg( \frac{8L^2\|B\|^2\beta_{T}^2}{\lambda_-\,\Delta ^2}-\frac{2\la}{\la_-} \bigg) \,\vee\, t_\delta.\label{eq:T0}
\end{align}
 Then, with probability at least $1-\delta$, for all $t=T'+1,\ldots,T$ it holds that $x^*\in\Dts$.
\end{lemma}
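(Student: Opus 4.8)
The plan is to show that, for the true parameter $\mu \in \Cc_t$ (which holds with probability $1-\delta$ by Theorem \ref{thm:confidence_ball}), the optimal action $x^*$ satisfies $v^\dagger B x^* \leq c$ for \emph{every} $v \in \Cc_t$, not just for $v = \mu$. Since $x^* \in \Dc_0$ already, this is exactly the condition $x^* \in \Dts$ from \eqref{eq:safeset}. The key inequality to establish is
\begin{align}
\max_{v \in \Cc_t} v^\dagger B x^* \;\leq\; \mu^\dagger B x^* + \big(\max_{v\in\Cc_t} (v-\mu)^\dagger B x^*\big) \;\leq\; \mu^\dagger B x^* + \Delta \;=\; c, \nn
\end{align}
so it suffices to bound the perturbation term $\max_{v \in \Cc_t}(v-\mu)^\dagger B x^*$ by $\Delta$.

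First I would bound this perturbation term using the geometry of the confidence ellipsoid. For any $v \in \Cc_t$ we have $\|v - \hat\mu_t\|_{A_t} \leq \beta_t$, and since $\mu \in \Cc_t$ as well, the triangle inequality in the $A_t$-norm gives $\|v - \mu\|_{A_t} \leq 2\beta_t$. Then by Cauchy–Schwarz in the $A_t$-norm,
\begin{align}
(v - \mu)^\dagger B x^* \;=\; (v-\mu)^\dagger (B x^*) \;\leq\; \|v-\mu\|_{A_t}\,\|B x^*\|_{A_t^{-1}} \;\leq\; 2\beta_t\,\|B x^*\|_{A_t^{-1}}. \nn
\end{align}
Next I would control $\|B x^*\|_{A_t^{-1}}$: since $A_t \succeq A_{T'+1} \succeq \lamin(A_{T'+1}) \Id$ for $t \geq T'+1$, we get $\|B x^*\|_{A_t^{-1}}^2 \leq \|B x^*\|_2^2 / \lamin(A_{T'+1}) \leq \|B\|^2 L^2 / \lamin(A_{T'+1})$, using $\|x^*\|_2 \leq L$ from Assumption \ref{assum:bounded}.

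Now I would invoke Lemma \ref{lemm:HT'}: since $T' \geq t_\delta$, with probability at least $1-\delta$ we have $\lamin(A_{T'+1}) \geq \la + \la_- T'/2$. Combining the displays, on the intersection of this event with $\{\mu \in \Cc_t \ \forall t\}$ (still probability at least $1-\delta$ after a union bound, or by reusing the same $\delta$ as the paper apparently does — I would state it carefully), we get for all $t \geq T'+1$
\begin{align}
\max_{v\in\Cc_t}(v-\mu)^\dagger B x^* \;\leq\; \frac{2\beta_t\,\|B\|\,L}{\sqrt{\la + \la_- T'/2}} \;\leq\; \frac{2\beta_T\,\|B\|\,L}{\sqrt{\la + \la_- T'/2}}, \nn
\end{align}
using the monotonicity $\beta_t \leq \beta_T$ from \eqref{eq:beta_t}. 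Requiring the right-hand side to be at most $\Delta$ is equivalent to $\la + \la_- T'/2 \geq 4\beta_T^2\|B\|^2 L^2/\Delta^2$, i.e., $T' \geq \tfrac{8L^2\|B\|^2\beta_T^2}{\la_-\Delta^2} - \tfrac{2\la}{\la_-}$, which is precisely the first term in the definition of $T_\Delta$ in \eqref{eq:T0}; the $\vee\, t_\delta$ is what makes Lemma \ref{lemm:HT'} applicable.

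The only genuinely delicate point is the probability bookkeeping: Lemma \ref{lemm:HT'} and Theorem \ref{thm:confidence_ball} each hold with probability $1-\delta$, and one must decide whether to union-bound (giving $1-2\delta$) or — as the statement suggests — to run both at level $\delta/2$ or argue they can share the event. I would handle this by noting the two events can be made to hold simultaneously with probability at least $1-\delta$ by halving the confidence parameters (absorbing the constant into logarithms, which only affects hidden constants), and flag it as a routine adjustment. Everything else is the elementary chain of Cauchy–Schwarz and eigenvalue bounds above; no step is technically hard once the reduction ``bound $\max_{v\in\Cc_t}(v-\mu)^\dagger Bx^* \leq \Delta$'' is identified as the target.
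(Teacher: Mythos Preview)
Your proposal is correct and follows essentially the same argument as the paper. The paper packages the step ``$\|v-\mu\|_{A_t}\leq 2\beta_t$ for all $v\in\Cc_t$'' into an enlarged confidence region $\tilde{\Cc}_t$ and a corresponding shrunk safe set $\Dtss=\{x:\mu^\dagger Bx+2\beta_t\|Bx\|_{A_t^{-1}}\leq c\}$, and then shows $x^*\in\Dtss\subseteq\Dts$ via exactly your chain of inequalities (Cauchy--Schwarz, $\lamin(A_t)\geq\lamin(A_{T'+1})$, Lemma~\ref{lemm:HT'}, and $\beta_t\leq\beta_T$); the resulting algebraic condition on $T'$ is identical to yours. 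Your observation about the probability bookkeeping is also on point: the paper's appendix restatement of this lemma in fact carries the conclusion with probability $1-2\delta$ (one $\delta$ for $\mu\in\Cc_t$ and one for Lemma~\ref{lemm:HT'}), so the $1-\delta$ in the main-text statement should be read up to that routine union-bound adjustment.
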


In light of our discussion in Sections \ref{sec:regret_of_safety} and \ref{sec:AT'}, once we have established that  $x^*\in\Dts$ for $t=T'+1,\ldots,T$, the regret of safety becomes nonpositive and we can show that the algorithm performs just like classical LUCB during the exploration-exploitation phase \footnote{Our simulation results in Appendix \ref{sec:sim} emphasize the critical role of a sufficiently long pure exploration phase by \SUCB~as suggested by Lemma \ref{lem:x^*in}. Specifically, Figure \ref{fig:regret,polytope} depicts an instance where \emph{no} exploration leads to significantly worse order of regret.}. This is formalized in Theorem \ref{thm:regretdeltanotzero} showing that when $\Delta>0$ (and is known), then the regret of \SUCB~is $\Otilde(\sqrt{T})$.

\begin{theorem}[Problem-dependent bound; $\Delta>0$]\label{thm:regretdeltanotzero}
Let the same assumptions as in Lemma \ref{lem:x^*in} hold. Initialize \SUCB~with $T'\geq T_\Delta$ specified in \eqref{eq:T0}. Then, for $T\geq T'$, with probability at least $1-2\delta$, the cumulative regret of \SUCB~satisfies
\begin{align}\label{eq:bound_Delta>0} 
    R_T\leq 2T'+2\beta_T\sqrt{2d \,(T-T') \,\log \left(\frac{2TL^2}{d(\lambda_-T'+2\la)}\right)} \,\,.
\end{align}
Specifically, choosing $T'=T_{\Delta}$ guarantees cumulant regret $\Oc(T^{1/2}\log{T})$.
\end{theorem}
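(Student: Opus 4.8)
The plan is to bound the cumulative regret $R_T = \sum_{t=1}^{T} r_t$ by splitting it at the end of the pure-exploration phase: $R_T = \sum_{t=1}^{T'} r_t + \sum_{t=T'+1}^{T} r_t$. For the first sum, each instantaneous regret $r_t = \mu^\dagger x_t - \mu^\dagger x^*$ is at most $2$ by the boundedness Assumption~\ref{assum:bounded} (since $\mu^\dagger x \in [-1,1]$ for all $x\in\Dc_0$), so this contributes at most $2T'$. The bulk of the work is the second sum, over the safe exploration-exploitation phase.

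For $t \geq T'+1$, I would use the decomposition \eqref{eq:Terms}, $r_t = (\mu^\dagger x_t - \tilde\mu_t^\dagger x_t) + (\tilde\mu_t^\dagger x_t - \mu^\dagger x^*)$. By Lemma~\ref{lem:x^*in}, on the event of probability at least $1-\delta$ that $T' \geq T_\Delta$ guarantees, we have $x^* \in \Dts$ for all such $t$; combined with the event $\mu\in\Cc_t$ (probability at least $1-\delta$ by Theorem~\ref{thm:confidence_ball}), the optimistic choice in Step~\ref{algo:x_t} gives $\tilde\mu_t^\dagger x_t = \min_{x\in\Dts}\min_{v\in\Cc_t} v^\dagger x \leq \min_{v\in\Cc_t} v^\dagger x^* \leq \mu^\dagger x^*$, so $\rm Term~II \leq 0$. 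A union bound puts us on both events with probability at least $1-2\delta$. Then $r_t \leq \mu^\dagger x_t - \tilde\mu_t^\dagger x_t = (\mu - \tilde\mu_t)^\dagger x_t$, and since both $\mu,\tilde\mu_t\in\Cc_t$, a Cauchy–Schwarz step in the $A_t$-norm bounds this by $\|\mu-\tilde\mu_t\|_{A_t}\|x_t\|_{A_t^{-1}} \leq 2\beta_t\|x_t\|_{A_t^{-1}}$; one may also cap it at $2$, giving $r_t \leq 2\min(1,\beta_t\|x_t\|_{A_t^{-1}}) \leq 2\beta_T\min(1,\|x_t\|_{A_t^{-1}})$ using monotonicity of $\beta_t$.

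Next I would apply Cauchy–Schwarz over the rounds: $\sum_{t=T'+1}^{T} r_t \leq 2\beta_T\sqrt{(T-T')\sum_{t=T'+1}^{T}\min(1,\|x_t\|_{A_t^{-1}}^2)}$, and then invoke the standard elliptical-potential / self-normalized bound $\sum_{t=T'+1}^{T}\min(1,\|x_t\|_{A_t^{-1}}^2) \leq 2\log\frac{\det A_{T+1}}{\det A_{T'+1}} \leq 2\big(\log\det A_{T+1} - d\log\lamin(A_{T'+1})\big)$. Here is where the pure-exploration phase pays off: by Lemma~\ref{lemm:HT'} (valid since $T'\geq t_\delta$), $\lamin(A_{T'+1}) \geq \la + \lambda_- T'/2$, while $\det A_{T+1} \leq \big(\tr(A_{T+1})/d\big)^d \leq \big((\la + TL^2)/d\big)^d \leq (TL^2/d)^d$ by AM–GM and $\|x_t\|_2\leq L$ (absorbing constants). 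Substituting yields $\sum\min(1,\|x_t\|_{A_t^{-1}}^2) \leq 2d\log\frac{TL^2}{d(\lambda_- T' + 2\la)}$, which plugs back in to give exactly the stated bound \eqref{eq:bound_Delta>0}. Finally, setting $T' = T_\Delta = \Oc(\log T)$ (note $\beta_T = \Otilde(\sqrt d)$, so $T_\Delta = \Otilde(1/\Delta^2)$ up to log factors, constant in $T$) makes the first term $\Oc(\log T)$ and the second $\Oc(\beta_T\sqrt{dT\log T}) = \Otilde(\sqrt T)$, i.e. $\Oc(T^{1/2}\log T)$.

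The main obstacle I anticipate is the careful bookkeeping of the two high-probability events and making sure the Lemma~\ref{lem:x^*in} event and the confidence-set event are simultaneously in force so that $\rm Term~II\leq 0$ holds for \emph{every} $t$ in the range — the union bound is routine, but one must check that Lemma~\ref{lem:x^*in} already conditions on $\mu\in\Cc_t$ internally (to avoid triple-counting $\delta$) and that the definition of $T_\Delta$ in \eqref{eq:T0}, which uses $\beta_T$ rather than $\beta_t$, is what makes the inclusion uniform in $t$. The elliptical-potential argument and the determinant-trace bound are entirely standard (cf.\ \cite{abbasi2011improved,Dani08stochasticlinear}); the only non-standard ingredient is the lower bound on $\lamin(A_{T'+1})$ from the exploration phase, which is already supplied by Lemma~\ref{lemm:HT'}.
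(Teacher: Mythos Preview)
Your proposal is correct and follows essentially the same route as the paper: split $R_T$ at $T'$, bound the exploration phase trivially by $2T'$, use Lemma~\ref{lem:x^*in} (together with $\mu\in\Cc_t$) to make ${\rm Term~II}\leq 0$, and control ${\rm Term~I}$ via the standard elliptical-potential/determinant-trace argument with Lemma~\ref{lemm:HT'} supplying the lower bound on $\lamin(A_{T'+1})$. Your caveat about the probability bookkeeping is exactly right---the paper's proof also works on the intersection of the two events (confidence set valid, and the matrix-Chernoff event of Lemma~\ref{lemm:HT'}), and since Lemma~\ref{lem:x^*in} already uses the latter internally, reusing it for the elliptical-potential bound incurs no additional $\delta$, giving the stated $1-2\delta$.
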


The bound in \eqref{eq:bound_Delta>0} is a contribution of two terms. The first one is a trivial bound on the regret of the exploration-only phase of \SUCB~and is proportional to its duration $T'$.
Thanks to Lemma \ref{lem:x^*in} the duration of the  exploration phase is limited to $T_\Delta$ rounds and $T_\Delta$ is (at most) logarithmic in the total number of rounds $T$. 
Thus, the first summand in \eqref{eq:bound_Delta>0} contributes only $\Oc(\log{T})$ in the total regret. Note, however, that $T_\Delta$ grows larger as the normalized safety gap $\Delta/\|B\|$ becomes smaller. The second summand in \eqref{eq:bound_Delta>0} contributes $\Oc(T^{1/2}\log{T})$ and bounds the cumulant regret of the exploration-exploitation phase, which takes the bulk of the algorithm. More specifically, it bounds the contribution of $\rm Term~I$ in \eqref{eq:Terms} since the $\rm Term~II$ is zeroed out once $x^*\in\Dts$ thanks to Lemma \ref{lem:x^*in}. 
Finally, note that Theorem \ref{thm:regretdeltanotzero} requires the total number of rounds $T$ to be large enough for the desired regret performance. This is the price paid for the extra safety constraints compared to the performance of the classical LUCB in the original linear bandit setting.
We remark that existing lower bounds for the simpler problem without safety constraints (e.g. \cite{Tsitsiklis,Dani08stochasticlinear}), show that the regret $\Otilde(\sqrt{Td})$ of Theorem \ref{thm:regretdeltanotzero} cannot be improved modulo logarithmic factors.
The proofs of Lemma \ref{lem:x^*in} and Theorem \ref{thm:regretdeltanotzero} are in Appendix \ref{proof:thm2}. 


\subsection{General upper bound}
\vspace{-5pt}
We now extend the results of Section \ref{sec:Delta>0} to instances where the safety gap is zero, i.e. $\Delta=0$. In this case, we cannot guarantee an exploration phase that results in $x^*\in\Dts, t>T'$ in a reasonable time length $T'$. Thus, the regret of safety is not necessarily non-positive and it is unclear whether a sub-linear cumulant regret is possible.

Theorem \ref{thm:worst-case} shows that \SUCB~achieves regret $\Otilde(T^{2/3})$ when $\Delta=0$. Note that this (worst-case) bound is also applicable when the safety gap is unknown to the learner. While it is significantly worse than the performance guaranteed by Theorem \ref{thm:regretdeltanotzero}, it proves that \SUCB~always leads to $R_T/T\rightarrow 0$ as $T$ grows large. The proof is deferred to Appendix \ref{proof:thm2}.

\begin{theorem}[General bound: worst-case]\label{thm:worst-case}
Suppose Assumptions \ref{assum:noise}, \ref{assum:bounded} and \ref{assum:nonempty} hold. Fix any $\delta\in(0,0.5)$. Initialize \SUCB~with $T'\geq t_\delta$ specified in Lemma \ref{lemm:HT'}. Then, with probability at least $1-2\delta$ the cumulative regret $R_T$ of \SUCB~for 
$T\geq T'$ satisfies
\begin{align}\label{eq:Delta=0}
    R_T \leq 2T'+2\beta_T\sqrt{2d(T-T') \log \left(\frac{2TL^2}{d(\lambda_-T'+2\la)}\right)}+\frac{2\sqrt{2}\|B\|L\beta_T (T-T')}{c\sqrt{\lambda_-T'+2\la}}.
\end{align}
\mbox{Specifically, choosing ${T'=T_0:=\Big(\frac{\|B\|L \beta_T T}{c \sqrt{2\lambda_-}}\Big)^{\frac{2}{3}}\vee t_{\delta}}$\,, guarantees regret $\Oc(T^{2/3}\log T)$.}
\end{theorem}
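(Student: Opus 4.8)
The plan is to split $R_T=\sum_{t=1}^{T'}r_t+\sum_{t=T'+1}^{T}r_t$. The pure-exploration rounds contribute at most $2T'$, since $r_t=\mu^\dagger x_t-\mu^\dagger x^*\le 2$ by Assumption \ref{assum:bounded}. For the remaining rounds I would work on the intersection of the events $\{\mu\in\Cc_t\ \forall t\}$ (Theorem \ref{thm:confidence_ball}, probability $\ge 1-\delta$) and $\{\lamin(A_{T'+1})\ge\la+\lambda_- T'/2\}$ (Lemma \ref{lemm:HT'}, probability $\ge 1-\delta$ since $T'\ge t_\delta$), so a union bound yields the claimed $1-2\delta$. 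On this event I use $r_t={\rm Term~I}+{\rm Term~II}$ from \eqref{eq:Terms}. Bounding ${\rm Term~I}$ is the standard LUCB argument: optimism together with $\mu,\tilde\mu_t\in\Cc_t$ gives ${\rm Term~I}=(\mu-\tilde\mu_t)^\dagger x_t\le 2\beta_t\|x_t\|_{A_t^{-1}}$, and Cauchy--Schwarz plus the elliptical-potential lemma bound $\sum_{t=T'+1}^T\|x_t\|_{A_t^{-1}}^2\le 2\log(\det A_{T+1}/\det A_{T'+1})$; combining $\det A_{T+1}\le(\tr{A_{T+1}}/d)^d\le(\la+TL^2/d)^d$ with $\det A_{T'+1}\ge\lamin(A_{T'+1})^d\ge(\la+\lambda_- T'/2)^d$ produces the $\log\!\big(\tfrac{2TL^2}{d(\lambda_- T'+2\la)}\big)$ factor, hence the second summand of \eqref{eq:Delta=0}.

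The heart of the proof is ${\rm Term~II}=\tilde\mu_t^\dagger x_t-\mu^\dagger x^*$, the ``regret of safety'', which with $\Delta=0$ need not be non-positive because $x^*$ may lie outside $\Dts$. The key observation is that a slightly shrunk copy of $x^*$ is provably safe. Since $\Delta=0$ we have $\mu^\dagger Bx^*=c$, so for every $v\in\Cc_t$, $v^\dagger Bx^*=c+(v-\mu)^\dagger Bx^*\le c+\|v-\mu\|_{A_t}\|Bx^*\|_{A_t^{-1}}\le c(1+\gamma_t)$, where $\gamma_t:=\tfrac{2\beta_t\|B\|L}{c\sqrt{\lamin(A_t)}}$; here I used $\|v-\mu\|_{A_t}\le 2\beta_t$ (both $v,\mu\in\Cc_t$), Cauchy--Schwarz in the $A_t$-norm, $\|Bx^*\|_{A_t^{-1}}\le\|B\|L/\sqrt{\lamin(A_t)}$ and $\|x^*\|_2\le L$. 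Because $\Dc_0$ is convex and contains the origin in its interior (Assumption \ref{assum:nonempty}), the point $x^*/(1+\gamma_t)$ lies in $\Dc_0$ and, by the bound just derived, satisfies $v^\dagger B\tfrac{x^*}{1+\gamma_t}\le c$ for all $v\in\Cc_t$; hence $x^*/(1+\gamma_t)\in\Dts$. Optimality of $(\tilde\mu_t,x_t)$ in Step \ref{algo:x_t} and $\mu\in\Cc_t$ then give $\tilde\mu_t^\dagger x_t\le\min_{v\in\Cc_t}v^\dagger\tfrac{x^*}{1+\gamma_t}\le\tfrac{\mu^\dagger x^*}{1+\gamma_t}$, so ${\rm Term~II}\le\mu^\dagger x^*\big(\tfrac{1}{1+\gamma_t}-1\big)=-\tfrac{\gamma_t}{1+\gamma_t}\mu^\dagger x^*\le\gamma_t$, the last step using $-\mu^\dagger x^*\le 1$. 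Finally $\lamin(A_t)\ge\lamin(A_{T'+1})\ge\la+\lambda_- T'/2$ for $t>T'$ and $\beta_t\le\beta_T$ give $\gamma_t\le\tfrac{2\sqrt2\,\|B\|L\beta_T}{c\sqrt{\lambda_- T'+2\la}}$, and summing over the $T-T'$ rounds produces exactly the third summand of \eqref{eq:Delta=0}.

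Adding the three contributions proves \eqref{eq:Delta=0}. For the rate, the bound reads $R_T\le 2T'+\Otilde(\sqrt{dT})+\tfrac{\kappa\,\beta_T T}{\sqrt{\lambda_- T'}}$ for a constant $\kappa$ depending only on $\|B\|,L,c$; the first term increases and the third decreases in $T'$, and balancing them gives $T'\asymp(\beta_T T/\sqrt{\lambda_-})^{2/3}$. With the explicit choice $T'=T_0=\big(\tfrac{\|B\|L\beta_T T}{c\sqrt{2\lambda_-}}\big)^{2/3}\vee t_\delta$, both $2T'$ and the safety term are $\Otilde(T^{2/3})$ and dominate the $\Otilde(\sqrt T)$ contribution of ${\rm Term~I}$, so $R_T=\Oc(T^{2/3}\log T)$ (using $\beta_T=\Otilde(\sqrt d)$). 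I expect the main obstacle to be the ${\rm Term~II}$ bound --- specifically, recognizing that the origin being interior to $\Dc_0$ lets one retract $x^*$ into $\Dts$ and that the required retraction $\gamma_t$ scales like $1/\sqrt{\lamin(A_t)}$, which is exactly the quantity the pure-exploration phase controls through Lemma \ref{lemm:HT'}.
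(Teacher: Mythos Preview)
Your proposal is correct and follows essentially the same route as the paper. The paper packages the retraction of $x^*$ via an auxiliary ``shrunk'' safe set $\widetilde{\Dc}_t^{\text{s}}$ and a parameter $\alpha_t\in[0,1]$ with $\alpha_t x^*\in\widetilde{\Dc}_t^{\text{s}}\subseteq\Dts$, then bounds $1-\alpha_t\le \tfrac{2\beta_t}{c}\|Bx^*\|_{A_t^{-1}}$; your choice $x^*/(1+\gamma_t)$ is exactly a particular admissible $\alpha_t$, and the resulting bound on {\rm Term~II} coincides. One small slip: you invoke ``since $\Delta=0$ we have $\mu^\dagger Bx^*=c$'', but the theorem is stated for arbitrary $\Delta\ge 0$ (it is the worst-case bound, applicable also when $\Delta$ is unknown). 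Your inequality $v^\dagger Bx^*\le c(1+\gamma_t)$ only needs $\mu^\dagger Bx^*\le c$, which always holds, so simply replace the equality by this inequality and the argument is general.
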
 

Compared to Theorem \ref{thm:regretdeltanotzero}, the bound in \eqref{eq:Delta=0} is now comprised of three terms. The first one captures again the exploration-only phase and is linear in its duration $T'$. However, note that $T'$ is now $\Oc(T^{2/3}\log T)$, i.e., of the same order as the total bound. The second term bounds the total contribution of $\rm Term~I$ of the exploration-exploitation phase. As usual, its order is $\Otilde(T^{1/2})$. Finally, the additional third term bounds the regret of safety and is of the same order as that of the first term.

\vspace{-10pt}
\section{Unknown Safety Gap}\label{sec:GSUCB}
\vspace{-5pt}
In Section \ref{sec:Delta>0} we showed that when the safety gap $\Delta>0$, then \SUCB~achieves good regret performance $\Otilde(\sqrt{T})$. However, this requires that the value of $\Delta$, or at least a (non-trivial) lower bound on it, be known to the learner so that $T'$ is initialized appropriately according to Lemma \ref{lem:x^*in}. This requirement  might be restrictive in certain applications. When that is the case, one option is to run \SUCB ~with a choice of $T'$ as suggested by Theorem \ref{thm:worst-case}, but this could result in an unnecessarily long pure exploration period (during which regret grows linearly). Here, we present an alternative. Specifically, we propose a  variation of \SUCB ~refered to as \emph{generalized safe linear upper confidence bound} (GSLUCB). The key idea behind GSLUCB is to build a lower confidence bound $\Delta_t$ for the safety gap $\Delta$ and calculate the   length of the pure exploration phase associated with $\Delta_t$, denoted as $T'_t$. This allows the learner to stop the pure exploration phase at  round $t$ such that condition $t \leq T'_{t-1}$ has been met. While we do not provide a separate regret analysis for GSLUCB, it is clear that the worst case regret performance would match that of \SUCB ~with $\Delta = 0$. However, our numerical experiment highlights the improvements that GSLUCB can provide for the cases where $\Delta \neq 0$. We give a full explanation of GSLUCB, including how we calculate the lower confidence bound $\Delta_t$,  in Appendix \ref{algo:GSLUCB}.

Figure \ref{fig:regret,karmed} compares the average per-step regret of 1) \SUCB  ~with knowledge of $\Delta$; 2) \SUCB  ~without knowledge of $\Delta$ (hence, assuming $\Delta =0$); 3) GSLUCB without knowledge of $\Delta$, in a simplified setting of $K$-armed linear bandits with strictly positive safety gap (see Appendix \ref{sec:contextual}). The details on the parameters of the simulations are deferred to Appendix \ref{sec:sim}.

%
%

\begin{algorithm}
\caption{GSLUCB} \label{algo:GapUCB}  

\begin{algorithmic}[1]
\State \textbf{Pure exploration phase:}


\State $t \leftarrow  1$ , $T'_{0} = T_0$ 

\State \textbf{while} $\left( t\leq \min\left(T'_{t-1}, T_0 \right) \right)$
\State \quad Randomly choose $x_{t} \in \Dc^w$ and observe loss $\ell_t = c_t(x_{t})$.
\State  \quad  $\Delta_t$ = Lower confidence bound on $\Delta$  at round $t$
\State \quad \textbf{if} $\Delta_t>0 $ \textbf{then}  $T'_t = T_{\Delta_t}$ \label{T'delta}
\State \quad \textbf{else} $T'_t  = T_0$
\State \quad \textbf{end if}
\State  \quad $t \leftarrow  t+1$
\State \textbf{end while}
\State \textbf{Safe exploration exploitation phase:} Lines \ref{line6} - \ref{line13} of \SUCB ~for all remaining rounds.
\end{algorithmic}
\end{algorithm}


%
%

%

\begin{figure}
     \centering
     \begin{subfigure}[b]{0.48\textwidth}
         \centering
         \includegraphics[width=\textwidth]{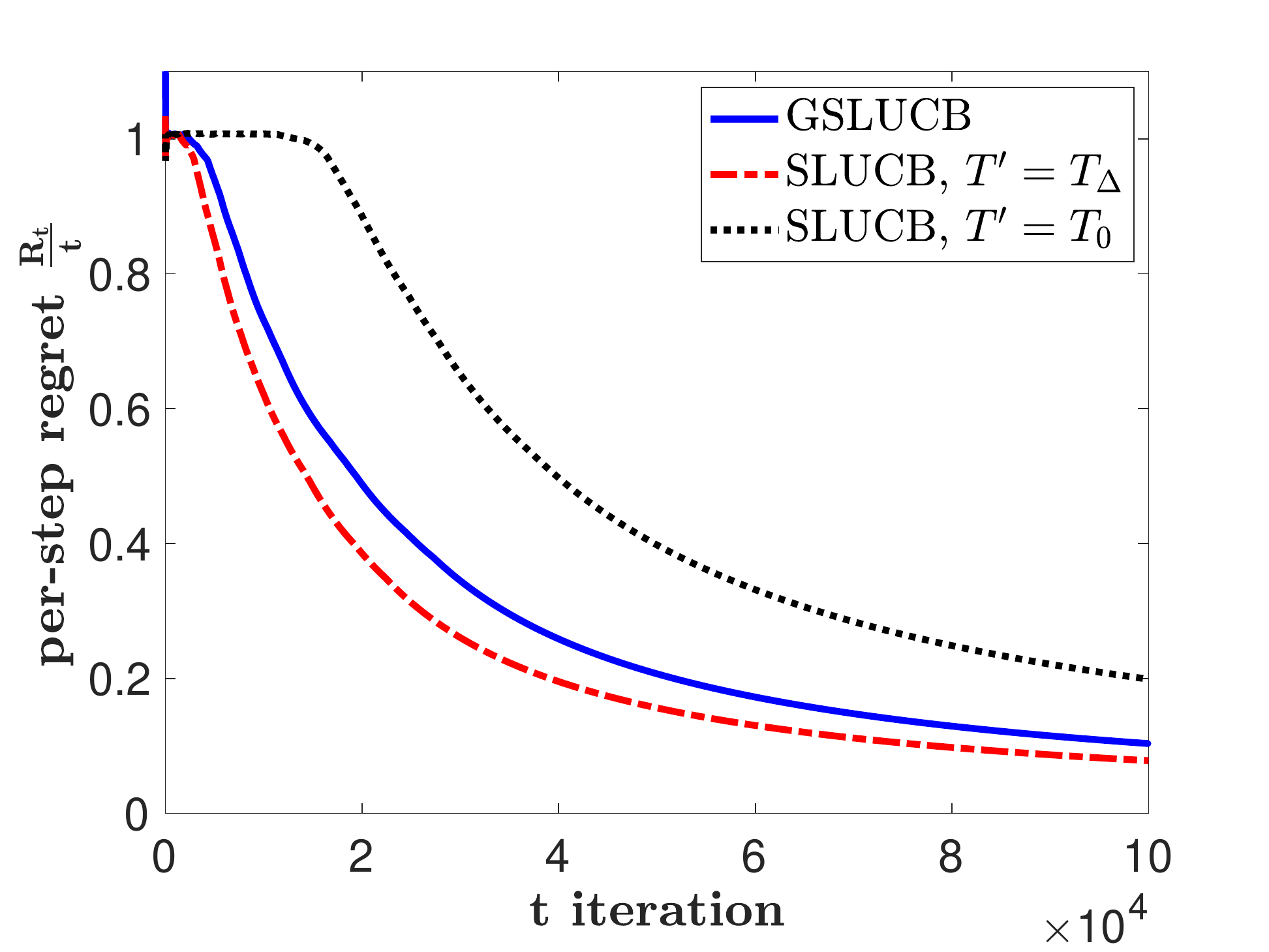}
         \caption{Average per-step regret of \SUCB ~and GSLUCB with a decision set of $K$ arms.}
         \label{fig:regret,karmed}
     \end{subfigure}
    \hfill 
     \begin{subfigure}[b]{0.48\textwidth}
         \centering
         \includegraphics[width=\textwidth]{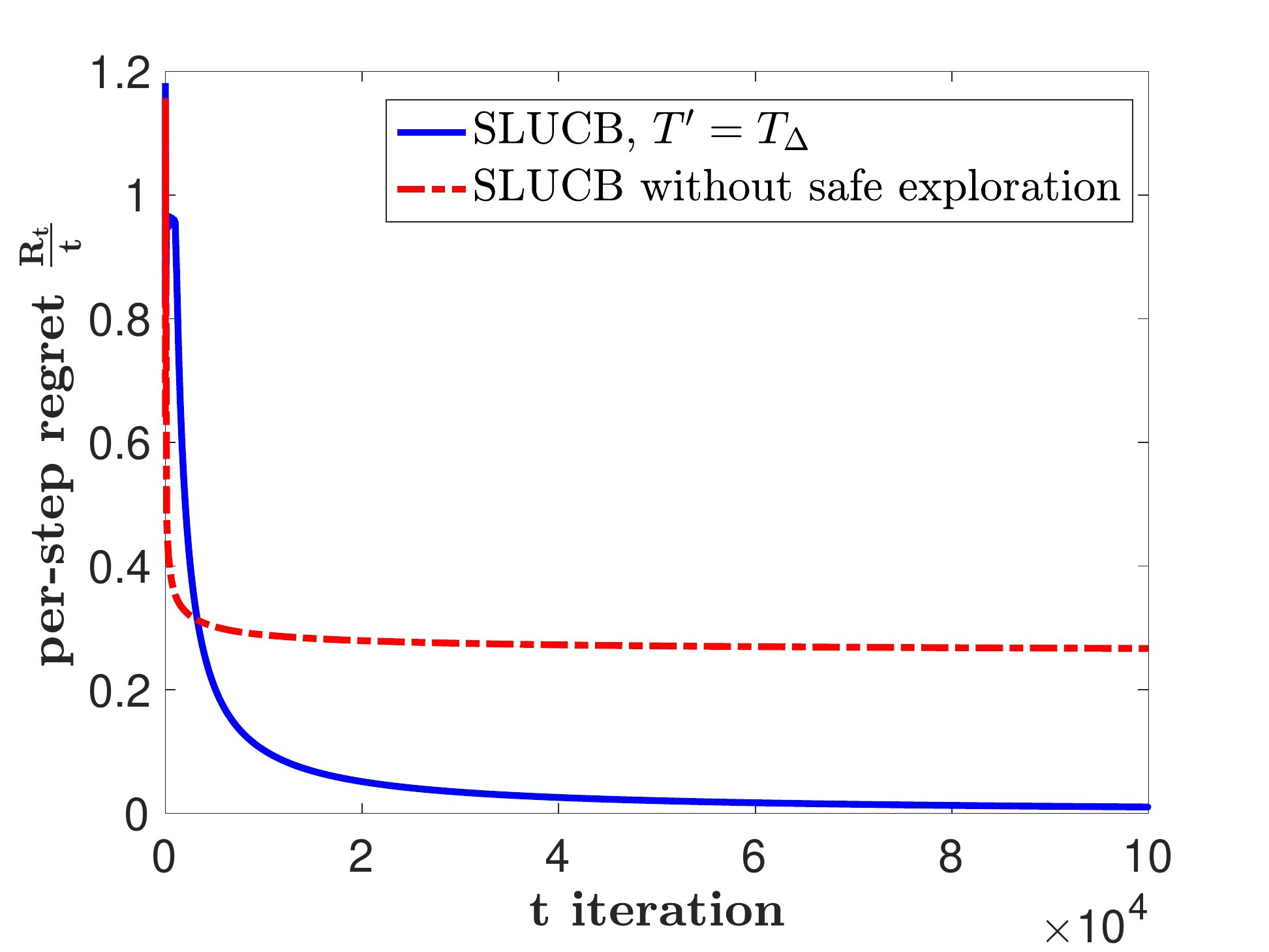}
          \caption{Per-step regret of \SUCB ~with and without pure exploration phase.} \label{fig:regret,polytope}
     \end{subfigure}
   \caption{Simulation of per-step regret.}
   \label{fig:regret}
\end{figure}

\vspace{-15pt}
\section{Conclusions}
\vspace{-10pt}
We have formulated a linear stochastic bandit problem with safety constraints that depend linearly on the unknown problem parameter $\mu$. While simplified, the model captures the  additional complexity introduced in the problem by the requirement that chosen actions belong to an unknown safe set. As such, it allows us to quantify tradeoffs between learning the safe set and minimizing the regret. Specifically, we  propose \SUCB~which is comprised of two phases: (i) a pure-exploration phase that speeds up learning the safe set; (ii) a safe exploration-exploitation phase that optimizes minimizing the regret. Our analysis suggests that the safety gap $\Delta$ plays a critical role. When $\Delta>0$ we show how to achieve regret $\Otilde(\sqrt{T})$ as in the classical linear bandit setting.
However,  when $\Delta=0$, the regret of \SUCB~is $\Otilde(T^{2/3})$.  It is an interesting open problem to establish lower bounds for an arbitrary policy that accounts for the safety constraints. Our analysis of \SUCB~suggests that $\Delta=0$ is a worst-case scenario, but it remains open whether the $\Otilde(T^{2/3})$ regret bound can be improved in that case. Natural extensions of the problem setting to multiple constraints and generalized linear bandits (possibly with generalized linear constraints) might also be of interest.

\newpage
\bibliographystyle{unsrt}
\bibliography{bibfile}

\newpage
\appendix

\section{Proof of Lemma \ref{lemm:HT'}} \label{sec:proofHT'}
In order to bound the minimum eigenvalue of the Gram matrix at round $T'+1$, we use the Matrix Chernoff Inequality \cite[Thm.~5.1.1]{tropp2015introduction}.
\begin{theorem}[Matrix Chernoff Inequality,~\cite{tropp2015introduction}]\label{thm:chernoff}
Consider a finite sequence $\{X_k\}$ of independent, random, symmetric matrices in $\mathbb{R}^d$. Assume that $\lamin(X_k) \geq 0$ and $\lamax(X_k) \leq L$ for each index $k$. Introduce the random matrix $Y = \sum _k X_k$. Let $\mu_{min}$ denote the minimum eigenvalue of the expectation $\E[Y]$,
\begin{equation}
    \mu_{\rm min} = \lamin\left(\E[Y]\right)=\lamin\left(\sum _k E[X_k]\right). \nonumber
\end{equation}
Then, for any $\eps\in(0,1)$, it holds,
\begin{equation}
    \Pr\left( \lamin(Y)\leq\epsilon\mu_{\rm min}\right)\leq d\cdot \exp\left(-(1-\epsilon)^2\frac{\mu_{\rm min}}{2L}\right). \nonumber
\end{equation}

\end{theorem}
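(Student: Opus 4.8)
The plan is to prove this via the matrix Laplace transform (matrix Chernoff) method, adapted to the \emph{minimum} eigenvalue. Since we are controlling a lower tail of $\lamin(Y)$, the natural device is an exponential moment with a \emph{negative} parameter. First I would fix $\theta<0$ and observe that, because multiplying by $\theta<0$ reverses the eigenvalue ordering, a Markov-type argument gives
\bal
\Pr\left(\lamin(Y)\leq t\right) = \Pr\left(e^{\theta\lamin(Y)}\geq e^{\theta t}\right) \leq e^{-\theta t}\,\E\left[e^{\theta\lamin(Y)}\right] \leq e^{-\theta t}\,\E\left[\tr{e^{\theta Y}}\right],\nn
\eal
where the last step uses $e^{\theta\lamin(Y)}=\lamax(e^{\theta Y})\leq \tr{e^{\theta Y}}$, valid since $e^{\theta Y}\geqp 0$. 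This reduces the problem to bounding the trace moment generating function $\E[\tr{e^{\theta Y}}]$.

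The crux of the proof --- and the step I expect to be the main obstacle --- is decoupling the independent summands inside the matrix exponential. Unlike the scalar case, the exponential of a matrix sum does not factor into a product, so I would invoke the subadditivity of the matrix cumulant generating function, a consequence of Lieb's concavity theorem:
\bal
\E\left[\tr{\exp\Big(\textstyle\sum_k\theta X_k\Big)}\right]\leq \tr{\exp\Big(\textstyle\sum_k\log\E\big[e^{\theta X_k}\big]\Big)}.\nn
\eal
This master inequality is the technical heart of the argument and is precisely what replaces the product-of-MGFs identity from the scalar Chernoff bound; everything surrounding it is standard matrix analysis.

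Next I would bound each per-summand term $\log\E[e^{\theta X_k}]$ in the semidefinite order. Since the eigenvalues of $X_k$ lie in $[0,L]$ and $x\mapsto e^{\theta x}$ is convex, the chord inequality $e^{\theta x}\leq 1+\frac{e^{\theta L}-1}{L}\,x$ holds on $[0,L]$; transferring this to matrices (via the eigenvalue bounds) and taking expectations gives $\E[e^{\theta X_k}]\leqp \Iden+\frac{e^{\theta L}-1}{L}\E[X_k]\leqp \exp\!\big(\frac{e^{\theta L}-1}{L}\E[X_k]\big)$, using $\Iden+A\leqp e^A$. Operator monotonicity of the logarithm then yields $\log\E[e^{\theta X_k}]\leqp \frac{e^{\theta L}-1}{L}\E[X_k]$. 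Summing over $k$ and applying monotonicity of the trace exponential, I obtain $\E[\tr{e^{\theta Y}}]\leq \tr{\exp(g(\theta)\,\E[Y])}$ with $g(\theta):=\frac{e^{\theta L}-1}{L}<0$. Because $g(\theta)<0$, the largest eigenvalue of $g(\theta)\,\E[Y]$ is attained at the \emph{smallest} eigenvalue $\mu_{\min}$ of $\E[Y]$, so $\tr{\exp(g(\theta)\E[Y])}\leq d\,e^{g(\theta)\mu_{\min}}$.

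Finally I would set $t=\eps\mu_{\min}$ and optimize the resulting bound $d\exp\!\big((-\theta\eps+g(\theta))\mu_{\min}\big)$ over $\theta<0$. Differentiating the exponent in $\theta$ gives the optimal $\theta^\star=\frac{\log\eps}{L}$, which is negative since $\eps\in(0,1)$, and produces the sharp bound $d\exp\!\big(\frac{\mu_{\min}}{L}(\eps-1-\eps\log\eps)\big)$. The cleaner stated form then follows from the elementary scalar inequality $\eps-1-\eps\log\eps\leq-\frac{(1-\eps)^2}{2}$ for $\eps\in(0,1)$, which I would verify by noting that the difference of the two sides vanishes together with its first derivative at $\eps=1$ and is concave on $(0,1)$. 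This yields $\Pr(\lamin(Y)\leq\eps\mu_{\min})\leq d\exp\!\big(-(1-\eps)^2\mu_{\min}/(2L)\big)$, as claimed.
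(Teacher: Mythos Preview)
Your proposal is a correct and self-contained proof of the Matrix Chernoff Inequality, following the now-standard matrix Laplace transform route (Markov on the trace MGF, Lieb subadditivity, the chord bound on $[0,L]$, and optimization in $\theta$). However, the paper does \emph{not} prove this statement at all: Theorem~\ref{thm:chernoff} is quoted verbatim from \cite[Thm.~5.1.1]{tropp2015introduction} and used as a black box in the proof of Lemma~\ref{lemm:HT'}. So there is no ``paper's own proof'' to compare against; what you have written is essentially Tropp's original argument, and it is sound.
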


\begin{proof}[Proof of Lemma \ref{lemm:HT'}.]~
Let $X_t = x_t x_t^\dagger$ for $t \in [T']$, such that each $X_t$ is a symmetric matrix with $\lamin(X_t)\geq 0$ and $\lamax(X_t)\leq L^2$. In this notation, $
    A_{T'+1} = \la I +\sum_{t = 1} ^{T'} X_t.$ 
In order to apply Theorem \ref{thm:chernoff}, we compute:
\begin{align}
    \mu_{\rm min}  &:= \lamin\left(\sum _{t=1}^{T'} \E[X_t]\right)\nonumber
    =\lamin\left(\sum _{t=1}^{T'} \E[x_t x_t^\dagger]\right)\nonumber
    =\lamin\left(T' \Sigma \right)\nonumber
    =\lambda_- T'. \nonumber
\end{align}
Thus, the theorem implies the following for any $\epsilon\in[0,1)$:
\begin{align} \label{eq:chernoff_in_lemma}
    \Pr\left[ \lamin(\sum_{t = 1} ^{T'} X_t)\leq\epsilon\lambda_-T'\right]\leq d\cdot \exp\left(-(1-\epsilon)^2\frac{\lambda_-T'}{2L^2}\right). 
\end{align}
To complete the proof of the lemma, simply choose $\epsilon = 0.5$ (say) and $T' \geq \frac{8L^2}{\lambda_-}\log(\frac{d}{\delta})$ in  \eqref{eq:chernoff_in_lemma}. This gives
$
    \Pr\left[ \lamin(A_{T'+1})\geq \la+\frac{\lambda_-T'}{2}\right]\geq 1-\delta,
$ as desired.
\end{proof}

\section{Proof of Theorems \ref{thm:regretdeltanotzero} and \ref{thm:worst-case}} \label{proof:thm2}
In this section, we present the proofs of Theorems \ref{thm:regretdeltanotzero} and \ref{thm:worst-case}. 


\subsection{Preliminaries}
\paragraph{Conditioning on $\mu\in\Cc_t,~\forall t> 0$.}  
Consider the event 
\begin{align}\label{eq:conditioned}
\Ec:=\{\mu\in \Cc_t,~\forall t> 0\},
\end{align}
 that $\mu$ is inside the confidence region for all times $t$. By Theorem \ref{thm:confidence_ball} the event holds with probability $1-\delta$. Onwards, we condition on this event, and we make repeated use of the fact that $\mu\in\Cc_t$ for all $t>0$, without further explicit reference.

\paragraph{Decomposing the regret in two terms.} Recall the decomposition of the instantaneous regret in two terms in \eqref{eq:Terms} as follows: 
%
%
\begin{align}
        r_t &= \mu^\dagger x_t - \mu^ \dagger x^*
        = \underbrace{\mu^\dagger x_t -\tilde \mu _t ^ \dagger x_t}_{\text{Term I}}+\underbrace{\tilde \mu _t ^ \dagger x_t - \mu^ \dagger x^*}_{\text{Term II}}. \label{eq:Terms_App}
    \end{align}
    
 As discussed in Section \ref{sec:regret_of_safety}, we control the two terms separately.
 
%


\subsection{Bounding Term I} \label{sec:termI}
The results in this subsection are by now rather standard in the literature (see for example \cite[]{abbasi2011improved}). We provide the necessary details for completeness.

We start with the following chain of inequalities, that hold for all $t\geq T'+1$:
\begin{align}
\text{Term I}&:= \mu^\dagger x_t -\tilde \mu _t ^ \dagger x_t
        =(\mu^\dagger x_t -\hat \mu _t ^ \dagger x_t)+(\hat \mu _t ^ \dagger x_t - \tilde \mu _t ^ \dagger x_t)\nn\\
        &\leq \|\mu-\hat \mu _t\|_{A_t}\|x_t\|_{A_t^{-1}}+\|\hat\mu_t-\tilde \mu _t\|_{A_t}\|x_t\|_{A_t^{-1}}\nn\\
        &\leq 2\beta_t \|x_t\|_{A_t^{-1}}\label{eq:TermIa}.
    \end{align}
The last inequality \eqref{eq:TermIa} follows from Theorem \ref{thm:confidence_ball} and the fact that $\mu$ and $\tilde\mu_t \in\Cc_t$. Recall, from Assumption \ref{assum:bounded}, the trivial bound on the instantaneous regret $$r_t=\mu^\dagger x_t - \mu^\dagger x^* \leq 2.$$ 
Thus, we conclude with the following
\begin{align}
    \text{Term I} 
    &\leq 2\min\big(\beta_t \|x_t\|_{A_t^{-1}},1\big). \label{eq:TermI_b}
\end{align}

The next lemma bounds the total contribution of the (squared) terms in \eqref{eq:TermIa} across the entire horizon $t=T'+1,\ldots,T$.
\begin{lemma}[Term I]\label{lemm:sum of width}
    Let Assumptions \ref{assum:noise} and \ref{assum:bounded} hold. Fix any $\delta\in(0,0.5)$ and assume that $T'$ is such that
$
T'\geq \frac{8L^2}{\lambda_-} \log\Big(\frac{d}{\delta}\Big)
$. Then,  with probability at least $1-\delta$, it holds
    \begin{equation}
        \sum_{t=T'+1}^T \min \left(\|x_t\|_{A_t^{-1}}^2,1\right)  \leq 2d \log \left(\frac{2TL^2}{d(2\la+\lambda_-T')}\right). \nonumber
    \end{equation}
    Thus, with probability at least $1-2\delta$, it holds
      \begin{equation}
        \sum_{t=T'+1}^T \big(\mu^\dagger x_t -\tilde \mu _t ^ \dagger x_t \big) \leq 2\beta_T\sqrt{2d\,(T-T')\, \log \left(\frac{2TL^2}{d\,(2\la+\lambda_-T')}\right)} \label{eq:TermI_app}.
    \end{equation}  
    \end{lemma}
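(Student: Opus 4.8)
\textbf{Proof proposal for Lemma \ref{lemm:sum of width}.}

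The plan is to combine two ingredients: (i) the standard ``elliptical potential'' bound that controls $\sum_{t=1}^T \min(\|x_t\|_{A_t^{-1}}^2,1)$ in terms of $\log\det(A_{T+1})/\log\det(A_1)$ — this is the Abbasi-Yadkori--P\'al--Szepesv\'ari argument that underlies Lemma 11 of \cite{abbasi2011improved}; and (ii) the eigenvalue lower bound from Lemma \ref{lemm:HT'}, which under the hypothesis $T'\ge \frac{8L^2}{\lambda_-}\log(d/\delta)$ gives, with probability at least $1-\delta$, that $\lamin(A_{T'+1})\ge \la + \lambda_-T'/2$. First I would restrict the sum to $t=T'+1,\dots,T$ and note that for such $t$ we have $A_t\succeq A_{T'+1}$, so $A_t^{-1}\preceq A_{T'+1}^{-1}$ and in particular $\|x_t\|_{A_t^{-1}}^2\le \|x_t\|_2^2/\lamin(A_{T'+1}) \le L^2/(\la+\lambda_-T'/2)$; combined with the potential-function telescoping, the per-step terms are small enough that the ``$\min$ with $1$'' is not actually binding for the $\log\det$ step, but carrying the $\min$ along keeps the statement clean.

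The core computation is the telescoping inequality $\sum_{t=T'+1}^{T}\min(\|x_t\|_{A_t^{-1}}^2,1) \le 2\log\frac{\det A_{T+1}}{\det A_{T'+1}}$, which follows from $\det A_{t+1} = \det A_t\,(1+\|x_t\|_{A_t^{-1}}^2)$ and the elementary inequality $\min(u,1)\le 2\log(1+u)$ for $u\ge 0$. Then I would upper bound $\det A_{T+1}$ by the AM--GM/trace bound: $\det A_{T+1}\le \big(\tr(A_{T+1})/d\big)^d \le \big((\la + \sum_{\tau=1}^{T}\|x_\tau\|_2^2)/d\big)^d \le \big((\la + TL^2)/d\big)^d$, and lower bound $\det A_{T'+1}\ge \lamin(A_{T'+1})^d \ge (\la+\lambda_-T'/2)^d$ using Lemma \ref{lemm:HT'}. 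Plugging these in gives
\begin{align}
\sum_{t=T'+1}^{T}\min(\|x_t\|_{A_t^{-1}}^2,1) \le 2d\log\left(\frac{\la+TL^2}{\la+\lambda_-T'/2}\right) \le 2d\log\left(\frac{2TL^2}{d(2\la+\lambda_-T')}\right),\nn
\end{align}
where in the last step I would absorb constants (e.g. $\la+TL^2\le 2TL^2$ for the regime of interest, or more carefully keep the $d$ in the denominator by using the trace bound $\det A_{T+1}\le ((\la+TL^2)/d)^d$ directly); a small amount of bookkeeping is needed here to land exactly on the stated constant, but it is routine. This establishes the first displayed inequality on the event of Lemma \ref{lemm:HT'}, which has probability at least $1-\delta$.

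For the second inequality I would invoke \eqref{eq:TermIa}: on the event $\mathcal E$ of Theorem \ref{thm:confidence_ball} (probability $\ge 1-\delta$), $\mu^\dagger x_t - \tilde\mu_t^\dagger x_t \le 2\beta_t\|x_t\|_{A_t^{-1}} \le 2\beta_T\|x_t\|_{A_t^{-1}}$ since $\beta_t$ is nondecreasing in $t$; combined with the trivial bound $r_t\le 2$ this gives $\mu^\dagger x_t-\tilde\mu_t^\dagger x_t \le 2\beta_T\min(\|x_t\|_{A_t^{-1}},1)$. Summing over $t=T'+1,\dots,T$ and applying Cauchy--Schwarz, $\sum_{t=T'+1}^T \min(\|x_t\|_{A_t^{-1}},1) \le \sqrt{(T-T')\sum_{t=T'+1}^T\min(\|x_t\|_{A_t^{-1}}^2,1)}$, then substituting the first part yields \eqref{eq:TermI_app}. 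A union bound over the event of Lemma \ref{lemm:HT'} and the event $\mathcal E$ gives the claimed probability $1-2\delta$. The only mildly delicate point is the interplay of the ``$\min$ with $1$'' truncation with the $\log\det$ telescoping — i.e. making sure the inequality $\min(u,1)\le 2\log(1+u)$ is applied correctly and that the eigenvalue lower bound is what makes the final constant clean — but this is standard and I do not anticipate a genuine obstacle.
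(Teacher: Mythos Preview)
Your proposal is correct and follows essentially the same route as the paper's proof: the determinant telescoping $\det A_{t+1}=\det A_t(1+\|x_t\|_{A_t^{-1}}^2)$, the inequality $\min(u,1)\le 2\log(1+u)$, the AM--GM upper bound on $\det A_{T+1}$, the eigenvalue lower bound on $\det A_{T'+1}$ via Lemma \ref{lemm:HT'}, and then Cauchy--Schwarz plus a union bound with the event of Theorem \ref{thm:confidence_ball} for \eqref{eq:TermI_app}. The only cosmetic difference is that the paper applies the trace bound as $\det A_{T+1}\le (TL^2/d)^d$ directly (silently dropping the $\la$ contribution), whereas you carry $\la+TL^2$ and then absorb it; both land on the stated constant.
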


    \begin{proof} The proof is mostly adapted from \cite[Lem.~9]{Dani08stochasticlinear} but we also exploit the bound on $\lamin(A_{T'+1})$ thanks to Lemma \ref{lemm:HT'}. We present the details for the reader's convenience.
     
    With probability at least $1-\delta$, we find that for all $t\geq T'+1$:
    \begin{align}
        \det(A_{t+1}) &= \det(A_t + x_t x_t^\dagger) = \det(A_t)\det(I+(A_t^{-\frac{1}{2}}x_t)(A_t^{-\frac{1}{2}}x_t)^\dagger) = \det(A_t)(1+\|x_t\|_{A_t^{-1}}^2) \nonumber\\
        &= \ldots = \det(A_{T'+1})\prod_{\tau = T'+1}^t (1+\|x_\tau\|_{A_\tau^{-1}}^2) \nn\\
        &\geq \left(\la+\frac{\lambda_-T'}{2}\right)^d \prod_{\tau = T'+1}^t(1+\|x_\tau\|_{A_\tau^{-1}}^2),\nn
    \end{align}
    where the last inequality follows form Lemma \ref{lemm:HT'} and the fact that $\det(A)=\prod_{i=1}^d \lambda_i(A) \geq (\lamin(A))^d$. Furthermore, by the AM-GM inequality applied to the eigenvalues of $A_{t+1}$, if holds 
    $$\det(A_{t+1}) = \prod_{i=1}^d \lambda_i(A_{t+1})\leq \left(\frac{tL^2}{d}\right)^d,$$ 
    where we also used the fact that $\|x_t\|_2\leq L$ for all $t$.
    These combined yield,    
    \begin{align*}
        \prod_{\tau = T'+1}^t(1+\|x_\tau\|_{A_\tau^{-1}}^2) \leq \left(\frac{2tL^2}{d(2\la+\lambda_-T')}\right)^d.
    \end{align*}
Next,  using the fact that for any $0\leq y \leq 1$, $\log(1+y)\geq y/2$, we have 
   \begin{align*}
       \sum_{t=T'+1}^T \min \left(\|x_t\|_{A_t^{-1}}^2,1\right) &\leq 2\sum_{t=T'+1}^T\log \left(\|x_t\|_{A_t^{-1}}^2+1\right) =2\log\Big(\prod_{t=T'+1}^T \big(\|x_t\|_{A_t^{-1}}^2+1\big)\,\Big)\\
       &\leq 2d \log \left(\frac{2TL^2}{d(2\la+\lambda_-T')}\right).
   \end{align*}
   It remains to prove \eqref{eq:TermI_app}. Recall from \eqref{eq:TermI_b} that for any $T'< t \leq T$, with probability at least $1-\delta$ (note that we have conditioned in the event $\Ec$ in \eqref{eq:conditioned}), 
   $$
   \big(\mu^\dagger x_t -\tilde \mu _t ^ \dagger x_t \big)\leq2\min\big(\beta_t \|x_t\|_{A_t^{-1}},1\big) \leq 2\beta_T\min\big(\|x_t\|_{A_t^{-1}},1\big),
   $$
   where for the inequality we have used the fact that $\beta_t\leq \beta_T$ (and assumed for simplicity that $T$ large enough such that $\beta_T>1$). Thus, the desired bound in \eqref{eq:TermI_app} follows from applying Cauchy-Schwartz inequality to the above. 
    \end{proof}

    

\subsection{Bounding Term II} \label{sec:termII}
As discussed in Section \ref{sec:AT'}, the challenge in bounding Term II in \eqref{eq:Terms} is that , in general, $\Dts\neq\Ds$, so $x^*$ might not belong in $\Dts$. Bounding Term II amounts to bounding a certain "distance" of the set $\Dts$ from the set $\Dc_0$. In order to accomplish this task, we proceed as follows. First, we define a shrunk version $\Dtss$ of $\Dts$, for which we have a more convenient characterization, compared to the original $\Dtss$. Then, we select the point $z_t$ in $\Dtss$ that is in the direction of $x^*$ and is as close to it as possible. Finally, we are able to bound the distance of $z_t$ to $x^*$.

\paragraph{A shrunk safe region $\Dtss$.}  Consider an enlarged confidence region $\tilde {\Cc_t}$ centered at $\mu$ defined as follows:
\begin{equation} \label{eq:ctilde}
    \tilde {\Cc_t} := \{v \in \mathbb{R}^d:\|v-\mu\|_{A_t} \leq 2 \beta_{t}\}  \supseteq \Cc_t.
\end{equation}
The inclusion property above holds since  $\mu\in\Cc_t$, and, by triangle inequality, for all $v\in \Cc_t$, one has that
$
\|v-\mu\|_{A_t} \leq \|v-\hat\mu_t\|_{A_t}+\|\hat\mu_t-\mu\|_{A_t} \leq 2\beta_t.
$


The definition of the enlarged confidence region in \eqref{eq:ctilde} naturally leads to the definition of a corresponding shrunk safe decision set $\Dtss$. Namely, let
\begin{align} \label{eq:Din}
    \Dtss &:= \{ x \in \Dc_0: v^\dagger Bx \leq c, \: \forall v \in \tilde {\Cc_t}\} \nonumber
 = \{x \in \Dc_0: \max_{v\in \tilde {\Cc_t} } v^\dagger Bx\leq c\}\nonumber\\
  &=\{x \in \Dc_0: \mu^\dagger Bx+2\beta_
  t\|Bx\|_{A_t^{-1}}\leq c\},
\end{align}
and observe that $\Dtss \subseteq \Dts$.
Note here that since by Assumption \ref{assum:nonempty} zero is in the interior of $\Dc_0$, the sets $\tilde{\Dts}$ and  $\Dts$ have a nonempty interior.

\paragraph{A point $z_t\in\Dtss$ close to $x^*$.} Let $z_t$ be a vector in the direction of $x^*$ that belongs in $\Dtss$ and is closest to $x^*$. Formally, $z_t := \alpha_t x^*$, where 
\begin{align}
\alpha_t:= \max\Big\{\alpha\in[0,1]\,|\, z_t = \alpha x^*\in\Dtss\Big\}.\nn
\end{align}
Since both $0$ and $x^*\in\Dc_0$, and, $\Dc_0$ is convex by assumption, it follows in view of \eqref{eq:Din} that
\begin{align}
\alpha_t:= \max\left\{\alpha\in[0,1]\,\Huge|\, \alpha\cdot\Big(\mu^\dagger Bx^*+2\beta_
  t\|Bx^*\|_{A_t^{-1}}\Big) \leq c \right\}.\label{eq:alphaa_0}
\end{align}
Recall that $C>0$, thus \eqref{eq:alphaa_0} can be simplified to the following:
\begin{equation}\label{eq:alphaa}
    \alpha_t=
    \begin{cases}
      1 &, \text{if}\quad \mu^\dagger B x^*+2\beta_t\|Bx^*\|_{A_t^{-1}}\leq c, \\
      \min\left(\frac{c}{\mu^\dagger B x^*+2\beta_t\|Bx^*\|_{A_t^{-1}}},1\right) &, \text{otherwise}.
    \end{cases}
\end{equation}

\paragraph{Bounding $\rm Term~II$ in terms of $\alpha_t$.} Due to the fact that $\Dtss \subseteq \Dts$, it holds that $z_t \in \Dts$. Using this, and optimality of $(\tilde\mu, x_t)$ in the minimization in Step \ref{algo:x_t} of Algorithm \ref{algo:Safe-LUCB}, we can bound Term II as follows:
\begin{align}
\text{Term II}&:=\tilde \mu _t ^ \dagger x_t - \mu^ \dagger x^*\nn\\
&\leq \mu ^ \dagger z_t - \mu^ \dagger x^* = \alpha_t\mu ^ \dagger x^* - \mu^ \dagger x^* \nn\\
&\leq |a_t-1|\,|\mu^\dagger x^*|\nn\\
&\leq |a_t-1| = (1-\alpha_t).\label{eq:TermIIa}
\end{align}
The inequality in the last line uses Assumption \ref{assum:bounded}. For the last equality recall that $\alpha_t\in[0,1]$

To proceed further from \eqref{eq:TermIIa} we consider separately the two cases $\Delta>0$ and $\Delta=0$ that lead to Theorems \ref{thm:regretdeltanotzero} and \ref{thm:worst-case}, respectively. 

%

\subsubsection{Bound for the case $\Delta>0$}
 Here, assuming that $\Delta>0$, we prove that if the duration $T'$ of the pure exploration phase of \SUCB~ is chosen appropriately, then $\alpha_t=1$, and equivalently, $x^*\in\Dts$. The precise statement is given in Lemma \ref{lem:xin_app} below, which is a restatement of Lemma \ref{lem:x^*in}, given here for the reader's convenience.


\begin{lemma}[$\Delta>0~\implies~x^*\in\Dts$]\label{lem:xin_app}
Let Assumptions \ref{assum:noise}, \ref{assum:bounded} and \ref{assum:nonempty} hold for all $t \in [T]$. Fix any $\delta\in(0,0.5)$ and assume a positive safety gap $\Delta>0$. 
Initialize \SUCB~with 
\begin{align}
T'\geq \bigg( \frac{8L^2\|B\|^2\beta_{T}^2}{\lambda_-\,\Delta ^2}-\frac{2\la}{\la_-} \bigg) \,\vee\, t_\delta.\label{eq:T0_app}
\end{align}
 Then, with probability at least $1-2\delta$, for all $t=T'+1,\ldots,T$ it holds that $${\rm Term~II}:=\tilde \mu _t ^ \dagger x_t - \mu^ \dagger x^*\leq0.$$ 
 Thus, with the same probability
 \begin{align}\label{eq:Delta=0_TermII}
 \sum_{t=T'+1}^{T} (\tilde \mu _t ^ \dagger x_t- \mu^ \dagger x^*) \leq 0.
 \end{align}
\end{lemma}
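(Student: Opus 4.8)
The plan is to show that the hypothesis $\alpha_t = 1$ for all $t \ge T'+1$, which by \eqref{eq:TermIIa} immediately gives ${\rm Term~II} \le 1 - \alpha_t = 0$. Looking at the case analysis in \eqref{eq:alphaa}, it suffices to verify the inequality $\mu^\dagger B x^* + 2\beta_t \|Bx^*\|_{A_t^{-1}} \le c$. Since $\Delta := c - \mu^\dagger B x^*$, this is equivalent to $2\beta_t \|Bx^*\|_{A_t^{-1}} \le \Delta$, so the whole task reduces to making the confidence‑width term $\beta_t \|Bx^*\|_{A_t^{-1}}$ small enough.

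First I would bound $\|Bx^*\|_{A_t^{-1}}^2 = (Bx^*)^\dagger A_t^{-1} (Bx^*) \le \lambda_{\max}(A_t^{-1}) \|Bx^*\|_2^2 = \frac{\|Bx^*\|_2^2}{\lamin(A_t)} \le \frac{\|B\|^2 L^2}{\lamin(A_t)}$, using Assumption \ref{assum:bounded} that $\|x^*\|_2 \le L$. Next, since $A_t = A_{T'+1} + \sum_{\tau=T'+1}^{t-1} x_\tau x_\tau^\dagger \succeq A_{T'+1}$ for all $t \ge T'+1$, we have $\lamin(A_t) \ge \lamin(A_{T'+1})$; and, because $T' \ge t_\delta$, Lemma \ref{lemm:HT'} gives $\lamin(A_{T'+1}) \ge \lambda + \frac{\lambda_- T'}{2}$ with probability at least $1-\delta$. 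Combining, $\|Bx^*\|_{A_t^{-1}} \le \frac{\|B\| L}{\sqrt{\lambda + \lambda_- T'/2}}$ on that event.

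Then I would use monotonicity $\beta_t \le \beta_T$ (since $\beta_t$ in \eqref{eq:beta_t} is increasing in $t$) to get $2\beta_t \|Bx^*\|_{A_t^{-1}} \le \frac{2\beta_T \|B\| L}{\sqrt{\lambda + \lambda_- T'/2}}$ uniformly over $t \in \{T'+1,\dots,T\}$. It now remains to check that the initialization \eqref{eq:T0_app} makes this at most $\Delta$: rearranging $\frac{2\beta_T \|B\| L}{\sqrt{\lambda + \lambda_- T'/2}} \le \Delta$ gives $\lambda + \frac{\lambda_- T'}{2} \ge \frac{4\beta_T^2 \|B\|^2 L^2}{\Delta^2}$, i.e. $T' \ge \frac{8\beta_T^2 \|B\|^2 L^2}{\lambda_- \Delta^2} - \frac{2\lambda}{\lambda_-}$, which is precisely the first term in the max defining $T_\Delta$ in \eqref{eq:T0_app}. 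Hence $\alpha_t = 1$ and ${\rm Term~II} \le 0$ for every such $t$.

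Finally, for the probability accounting: the event $\mathcal{E} = \{\mu \in \Cc_t\ \forall t\}$ from \eqref{eq:conditioned} holds with probability $\ge 1-\delta$ (Theorem \ref{thm:confidence_ball}) and is what justifies the reduction through \eqref{eq:TermIIa} and the use of the enlarged region $\tilde{\Cc_t} \supseteq \Cc_t$; the eigenvalue bound from Lemma \ref{lemm:HT'} holds with probability $\ge 1-\delta$; a union bound over these two events yields probability at least $1 - 2\delta$ on which $\sum_{t=T'+1}^T (\tilde\mu_t^\dagger x_t - \mu^\dagger x^*) \le 0$, giving \eqref{eq:Delta=0_TermII}. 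I do not anticipate a serious obstacle here — the only point needing a little care is ensuring the two high‑probability events are the right ones and are combined correctly, and tracking that $\lamin(A_t)$ can indeed be lower bounded uniformly in $t$ by its value at $T'+1$ via the PSD ordering $A_t \succeq A_{T'+1}$.
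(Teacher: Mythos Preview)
Your proposal is correct and follows essentially the same argument as the paper: reduce ${\rm Term~II}\le 0$ to showing $\alpha_t=1$ via \eqref{eq:alphaa}--\eqref{eq:TermIIa}, bound $\beta_t\|Bx^*\|_{A_t^{-1}}$ using $\lamin(A_t)\ge\lamin(A_{T'+1})$, $\beta_t\le\beta_T$, $\|Bx^*\|_2\le\|B\|L$, and Lemma~\ref{lemm:HT'}, then verify that \eqref{eq:T0_app} makes this at most $\Delta/2$, with the $1-2\delta$ probability coming from a union bound over the confidence event $\Ec$ and the eigenvalue event of Lemma~\ref{lemm:HT'}.
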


\begin{proof}
Recall from \eqref{eq:TermIIa}, that for any $T'< t \leq T$, with probability at least $1-\delta$ (note that we have conditioned in the event $\Ec$ in \eqref{eq:conditioned}), $\text{Term II}=1-\alpha_t$. Thus, in view of \eqref{eq:alphaa}, it suffices to prove that for any $T'< t \leq T$, with probability at least $1-\delta$, it holds $\alpha_t=1$, or equivalently,
\begin{align}\label{eq:2show}
\mu^\dagger B x^*+2\beta_t\|Bx^*\|_{A_t^{-1}}\leq c ~~\Leftrightarrow~~ \beta_t\|Bx^*\|_{A_t^{-1}}\leq \Delta/2.
\end{align}
 For any $T'< t \leq T$, we have
\begin{align}\label{eq:bb_00}
\beta_t\|Bx^*\|_{A_{t}^{-1}} \leq \frac{\beta_{t}\|Bx^*\|_2}{\sqrt{\lamin(A_{t})}}\leq
 \frac{\beta_{T}\|Bx^*\|_2}{\sqrt{\lamin(A_{T'+1})}} \leq
 \frac{\beta_{T}\|B\| L}{\sqrt{\lamin(A_{T'+1})}},
\end{align}
where, in the second inequality we used $\beta_t\leq \beta_T$ and $\lamin(A_{t})\geq \lamin(A_{T'+1})$, and in the last inequality we used Assumption \ref{assum:bounded}. Next, since  $t_\delta\leq T'$, we may apply Lemma \ref{lemm:HT'}  to find from \eqref{eq:bb_00},  that for all $T'+1\leq t\leq T$, with probability at least $1-\delta$:
\begin{equation}
\beta_t\|Bx^*\|_{A_{t}^{-1}} \leq \frac{\sqrt{2}\|B\|L\beta_{T}}{\sqrt{2\la+\lambda_-{T'}}}. \label{eq:bb_11}
\end{equation}
To complete the proof of the lemma note that the assumption $T'\geq \frac{8\|B\|^2L^2\beta_{T}^2}{\lambda_-\Delta ^2}-\frac{2\la}{\la_-}$ when combined with \eqref{eq:bb_11}, it guarantees \eqref{eq:2show}, as desired.
\end{proof}

\begin{remark}\label{remark:modified Confidence region}
We remark on a simple tweak in the algorithm that results in a constant $T'$ (i.e., independent of $T$) in Lemma \ref{lem:xin_app}. However, this does \emph{not} change the final order of regret bound in Theorem \ref{thm:regretdeltanotzero}.
In particular, we modify \SUCB ~ to use
the nested (as is called in \cite{vanroy}) confidence region $\Bc_t = \cap_{\tau=1}^t \Cc_{\tau}$ at round $t$ such that $\ldots  \subseteq\Bc_{t+1} \subseteq \Bc_t \subseteq \Bc_{t-1} \subseteq \ldots $. According to Theorem \ref{thm:confidence_ball}, it is guaranteed that for all $t>0$, $\mu \in \Bc_t$, with high probability. Applying these nested confidence regions in creating safe sets, results in $\dots \subseteq \mathcal{D}_{t-1}^{s}\subseteq \mathcal{D}_{t}^{s} \subseteq \mathcal{D}_{t+1}^{s} \subseteq \dots$. Thanks to this, it is now guaranteed that once $x^* \in \Dts$, the optimal action $x^*$ will remain inside the safe decision sets for all rounds after $t$.
 Thus, it is sufficient to find the first round $t$, such that $x^* \in \Dts$. This leads to a shorter duration $T'$ for the pure exploration phase.
In particular, following the arguments in Lemma \ref{lem:xin_app}, it can be shown that $T'$ becomes the smallest value satisfying ${2\sqrt{2}\|B\|L\beta_{T'}}\leq \Delta \sqrt{2\lambda+\lambda_-{T'}}$, which is now a constant independent of $T$. 
\end{remark}
   
\subsubsection{Bound for the case $\Delta=0$} 
     \begin{lemma}[$\rm Term~II$ for $\Delta=0$] \label{lemm:sum of alpha}
Let Assumptions \ref{assum:noise}, \ref{assum:bounded} and \ref{assum:nonempty} hold. Fix any $\delta\in(0,0.5)$ and assume that $T'$ is such that
$
T'\geq t_{\delta}.\nn
$
 Then,  with probability at least $1-\delta$, it holds
  \begin{equation}\label{eq:Delta>0_1}
      \sum_{t=T'+1}^T 1-\alpha_t \leq  \frac{2\sqrt{2}\|B\|L\beta_T (T-T')}{c\sqrt{2\la+\lambda_-T'}}.
  \end{equation}
  Therefore, with probability at least $1-2\delta$, it holds 
   \begin{align}\label{eq:Delta>0_TermII}
 \sum_{t=T'+1}^{T} (\tilde \mu _t ^ \dagger x_t- \mu^ \dagger x^*) \leq \frac{2\sqrt{2}\|B\|L\beta_T (T-T')}{c\sqrt{2\la+\lambda_-T'}}.
 \end{align}
  \end{lemma}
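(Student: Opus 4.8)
The plan is to bound the per-round quantity $1-\alpha_t$ using the explicit formula for $\alpha_t$ in \eqref{eq:alphaa} specialized to the case $\Delta = 0$, and then sum over $t = T'+1,\ldots,T$. First I would note that when $\Delta = 0$ we have $\mu^\dagger B x^* = c$, so the first branch of \eqref{eq:alphaa} (which requires $\mu^\dagger B x^* + 2\beta_t\|Bx^*\|_{A_t^{-1}} \le c$) can never hold for $\beta_t > 0$; hence we are always in the second branch and, since $c > 0$ and the denominator exceeds $c$, the $\min$ is achieved by the fraction, giving
\begin{align}
\alpha_t = \frac{c}{\mu^\dagger B x^* + 2\beta_t\|Bx^*\|_{A_t^{-1}}} = \frac{c}{c + 2\beta_t\|Bx^*\|_{A_t^{-1}}}. \nonumber
\end{align}

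Next I would compute
\begin{align}
1 - \alpha_t = \frac{2\beta_t\|Bx^*\|_{A_t^{-1}}}{c + 2\beta_t\|Bx^*\|_{A_t^{-1}}} \le \frac{2\beta_t\|Bx^*\|_{A_t^{-1}}}{c}, \nonumber
\end{align}
where the inequality just drops the positive second term in the denominator. Then I would bound $\beta_t\|Bx^*\|_{A_t^{-1}}$ exactly as in the chain \eqref{eq:bb_00}--\eqref{eq:bb_11} from the proof of Lemma~\ref{lem:xin_app}: using $\|Bx^*\|_{A_t^{-1}} \le \|Bx^*\|_2/\sqrt{\lamin(A_t)}$, the monotonicity $\lamin(A_t) \ge \lamin(A_{T'+1})$ and $\beta_t \le \beta_T$, the bound $\|Bx^*\|_2 \le \|B\|L$ from Assumption~\ref{assum:bounded}, and finally Lemma~\ref{lemm:HT'} (applicable since $T' \ge t_\delta$), which gives $\lamin(A_{T'+1}) \ge \la + \lambda_- T'/2$ with probability at least $1-\delta$. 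This yields
\begin{align}
\beta_t\|Bx^*\|_{A_t^{-1}} \le \frac{\sqrt{2}\,\|B\|L\beta_T}{\sqrt{2\la + \lambda_- T'}} \nonumber
\end{align}
for all $T' < t \le T$ on that event. Substituting into the bound on $1-\alpha_t$ and summing the $T - T'$ identical terms produces \eqref{eq:Delta>0_1}.

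Finally, \eqref{eq:Delta>0_TermII} follows by combining \eqref{eq:Delta>0_1} with \eqref{eq:TermIIa}, which already established $\rm Term~II \le 1-\alpha_t$ on the event $\Ec = \{\mu \in \Cc_t,\ \forall t\}$; a union bound over $\Ec$ (probability $1-\delta$ by Theorem~\ref{thm:confidence_ball}) and the Matrix-Chernoff event from Lemma~\ref{lemm:HT'} (probability $1-\delta$) gives the stated $1-2\delta$. There is no real obstacle here — the case $\Delta = 0$ is actually cleaner than $\Delta > 0$ because we never need to argue $\alpha_t = 1$; the only point requiring a little care is verifying that the $\min$ in \eqref{eq:alphaa} is resolved in favor of the fraction (which is immediate since $2\beta_t\|Bx^*\|_{A_t^{-1}} \ge 0$ forces the fraction to be at most $1$) and that the denominator is strictly positive so the expression is well-defined. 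The rest is the same eigenvalue-lower-bounding argument already used for Lemma~\ref{lem:xin_app}, reused verbatim.
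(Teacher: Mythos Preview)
Your proposal is correct and follows essentially the same route as the paper. The one small difference: you specialize to $\Delta=0$ to get $\alpha_t = c/(c+2\beta_t\|Bx^*\|_{A_t^{-1}})$ as an \emph{equality}, whereas the paper instead uses only feasibility $\mu^\dagger Bx^*\le c$ to obtain $\alpha_t \ge c/(c+2\beta_t\|Bx^*\|_{A_t^{-1}})$ as an \emph{inequality} (valid in both branches of \eqref{eq:alphaa}), so its argument actually goes through for arbitrary $\Delta\ge 0$ even though the lemma is labeled ``$\Delta=0$''. After that point the two proofs are identical.
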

 
   \begin{proof}
Recall from \eqref{eq:TermIIa}, that for any $T'< t \leq T$, with probability at least $1-\delta$ (note that we have conditioned in the event $\Ec$ in \eqref{eq:conditioned}), $\text{Term II}=1-\alpha_t$. Thus, \eqref{eq:Delta>0_TermII} directly follows once we show \eqref{eq:Delta>0_1}. In what follows, we prove \eqref{eq:Delta>0_1}.

The definition of $\alpha_{t}$ in \eqref{eq:alphaa} and the fact that $\mu^\dagger B x^* \leq c$ imply that
  \begin{equation}
  \alpha_t =
     \begin{cases}
      1 &, \text{if}\ \mu^\dagger B x^*+2\beta_t\|Bx^*\|_{A_t^{-1}}\leq c, \\
       \frac{c}{\mu^\dagger B x^*+2\beta_t\|Bx^*\|_{A_t^{-1}}} \geq \frac{c}{c+2\beta_t\|Bx^*\|_{A_t^{-1}}} &, \text{otherwise}. \nonumber
    \end{cases}
\end{equation}
Thus, for all $t\geq T'+1$:
  \begin{equation}
      \alpha_t \geq \frac{c}{c+2\beta_t\|Bx^*\|_{A_t^{-1}}}, \qquad  \nonumber
  \end{equation}
from which it follows,
  \begin{align*}
      1-\alpha_{t} \leq \frac{2\beta_t\|B x^*\|_{A_t^{-1}}}{c+2\beta_t\|B x^*\|_{A_t^{-1}}}
      \leq \frac{2\beta_t}{c}\|B x^*\|_{A_t^{-1}}
      \leq \frac{2\beta_t\|B x^*\|_2}{c\sqrt{{\lamin(A_t)}}} 
      \leq \frac{2\beta_t\|B\|L}{c\sqrt{\lamin(A_{T'+1})}} .
  \end{align*}
  The last two inequalities follow as in \eqref{eq:bb_00}.
To complete the proof, note that since $T'\geq t_{\delta}$, we can apply Lemma \ref{lemm:HT'}. Thus, with probability at least $1-\delta$ it holds,
  \begin{align*}
      \sum_{t=T'+1}^T 1-\alpha_t &\leq \frac{2\beta_T \|B\|L(T-T')}{c\sqrt{\lamin(A_{T'+1})}}\leq \frac{2\sqrt{2}\|B\|L\beta_T (T-T')}{c\sqrt{2\la+\lambda_-T'}},
  \end{align*}
  as desired.
  \end{proof}


\subsection{Completing the proof of Theorem \ref{thm:regretdeltanotzero}}
     
We are now ready to complete the proof of Theorem \ref{thm:regretdeltanotzero}. Let $T$ sufficiently large such that
\begin{align}
T>T'\geq \bigg( \frac{8L^2\|B\|^2\beta_{T}^2}{\lambda_-\,\Delta ^2}-\frac{2\la}{\la_-} \bigg) \,\vee\, t_\delta.\label{eq:T0_app_thm2}
\end{align}
We combine Lemma \ref{lemm:sum of width} (specifically, Eqn. \eqref{eq:TermI_app}), Lemma \ref{lem:xin_app} (specifically, Eqn. \eqref{eq:Delta=0_TermII}), and, the decomposition in \eqref{eq:Terms_App}, to conclude that
\begin{align*}
 R_T = \sum_{t=1}^{T'} r_t+ \sum_{t=T'+1}^T r_t 
&\leq 2T'+2\beta_T\sqrt{2d(T-T') \log \left(\frac{2TL^2}{d(2\la+\lambda_-T')}\right)}.
\end{align*}
Specifically, choosing $T'=\bigg( \frac{8L^2\|B\|^2\beta_{T}^2}{\lambda_-\,\Delta ^2}-\frac{2\la}{\la_-} \bigg) \,\vee\, t_\delta$ in the above, results in 
\begin{align}\label{eq:22}
R_T = \Oc\left(\frac{\|B\|^2}{\la_-\Delta^2}d\sqrt{T} \log T\right),
\end{align}
where the constant in the Big-O notation may only depend on $L,S,R,\lambda$ and $\delta$.

\subsection{Completing the proof of Theorem \ref{thm:worst-case}}

We are now ready to complete the proof of Theorem \ref{thm:worst-case}. Let $T$ sufficiently large such that
\begin{align}
T>T'\geq t_{\delta}.\nn
\end{align}
We combine Lemma \ref{lemm:sum of width} (specifically, Eqn. \eqref{eq:TermI_app}), Lemma \ref{lemm:sum of alpha} (specifically, Eqn. \eqref{eq:Delta>0_TermII}), and, the decomposition in \eqref{eq:Terms_App}, to conclude that
\begin{align*}
 R_T = \sum_{t=1}^{T'} r_t+ \sum_{t=T'+1}^T r_t \leq 
2T'+2\beta_T\sqrt{2d(T-T') \log \left(\frac{2TL^2}{d(2\la+\lambda_-T')}\right)}
 +\frac{2\sqrt{2}\|B\|L\beta_T (T-T')}{c\sqrt{2\la+\lambda_-T'}}.
\end{align*}
%
%
%
%

Specifically, choosing $T'=\left(\frac{\|B\|L \beta_T T}{c \sqrt{2\lambda_-}}\right)^{\frac{2}{3}}\vee t_\delta$ in the above, results in 
\begin{align}
R_T = \Oc\left(\left(\frac{\|B\|}{c}\right)^{\frac{2}{3}} \lambda_-^{-1/3} d ~ T^{2/3}\log T\right),
\end{align}
where as in \eqref{eq:22} the constant in the Big-O notation may only depend on $L,S,R,\lambda$ and $\delta$.


\section{Extension to linear contextual bandits} \label{sec:contextual}
In this section, we present an extension to the setting of $K$-armed contextual bandit. At each round $t\in[T]$, the learner observes a context consisting of $K$ action vectors, $\{y_{t,a}: a\in[K]\} \subset \mathbb{R}^d$ and chooses one action denoted by $a_t$ and observes its associated loss, $\ell_t=\mu^\dagger y_{t,a_t} + \eta_t$. We consider the same constraint \eqref{eq:constraint} which results in a \emph{safe} set of actions at each round $\{y_{t,a}\,|\,a\in [K], \mu^\dagger B y_{t,a} \leq c \}$. The optimal action at round $t$ is denoted by $y_{t,a_t^*}$ where 
\begin{equation}\label{eq:opt_K}
    a_t^* \in \argmin_{a \in [K], \mu^\dagger B y_{t,a}\leq c }\mu^\dagger y_{t,a}.
\end{equation}

If the chosen action at round $t$ is denoted by $x_t:=y_{t,a_t}$ and the optimal one by $x_t^*:=y_{t,a_t^*}$, the cumulative regret over total $T$ rounds will be
\begin{equation}
    R_T = \sum_{t=1}^T \mu^\dagger x_t - \mu^\dagger x_t^*.\nn
\end{equation}


We briefly discuss how \SUCB~extends to the $K$-armed contextual setting with provable regret guarantees under the following assumptions.

First, we need the standard Assumptions \ref{assum:noise} and \ref{assum:bounded} that naturally extend to the linear contextual bandit setting. Beyond these, in order for the safe-bandit problem to be well-defined, we assume that safe actions exist at each round.  Equivalently, the feasible set in \eqref{eq:opt_K} is nonempty and $x_t^*$ is well-defined. 
Moreover, in order to be able to run the pure-exploration phase of \SUCB ~with random actions (that guarantee Lemma \ref{lemm:HT'} holds) we further require that at least one of these safe actions is randomly sampled at each round $t$ (technically, we need this assumption to hold only for rounds $1,\ldots,T'$). These two assumptions are both implied by Assumption \ref{assum:nonemptysafesets} below.
\begin{myassum}[Nonempty safe sets] \label{assum:nonemptysafesets}
Consider the set $\Dw = \{x \in \mathbb{R}^d:\|Bx\|_2\leq\frac{c}{S}\}$. Then, at each round $t$, $N_t\geq 1$ number of  $K$ action vectors lie within $\Dw$. 
\end{myassum}
Finally, in order to guarantee that \SUCB~has sub-linear regret for the $K$-armed linear setting we need that the safety gap at each round is strictly positive.
\begin{myassum}[Nonzero $\Delta$] \label{assum:Deltacontextual}
The safety gap $\Delta_t = c-\mu^\dagger B x_t^*$ at each round $t$ is positive.
\end{myassum}

Under these assumptions, \SUCB~naturally extends to the $K$-armed linear bandit setting. Specifically, at rounds $t\leq T'$, \SUCB~  randomly selects $x_t$ to be one of the available $N_t$ action vectors that belong to the set $\Dw$. Assume that $\lamin(\E[x_tx_t^\dagger])\geq \lambda_->0$ for all $t\in[T']$. 

After round $T'$, \SUCB ~implements the safe exploration-exploitation phase by choosing safe actions based on OFU principle as in \eqref{eq:ofu}. Therefore line \ref{algo:x_t} of \SUCB~  changes to
 \begin{equation}
     a_t = \arg \min_{a\in \Ats} \min_{v \in \Cc_t} v^\dagger y_{t,a},
 \end{equation}
 where the safe set at rounds $t\geq T'+1$ is defined by
\begin{equation} \label{eq:safesetcontextual}
    \Ats = \{a \in [K]: v^\dagger B y_{t,a} \leq c, \forall v\in \Cc_t \}. 
\end{equation}
 
 With these and subject to Assumptions \ref{assum:noise}, \ref{assum:bounded}, \ref{assum:nonemptysafesets} and \ref{assum:Deltacontextual}, it is straightforward to extend the results of Theorem \ref{thm:regretdeltanotzero} to the setting considered here. Namely, under these assumptions, \SUCB~achieves regret $\Otilde(\sqrt{T})$ when $T'$ is set to $T_{\Delta}$ as in \eqref{eq:T0} for $\Delta = \min_{t\in[T]} \Delta_t$. 

\section{\SUCB~with $\ell_1$-confidence region}\label{sec:L1}

In this section we briefly discussed a modified $\ell_1$-confidence region (as in \cite{Dani08stochasticlinear}), which is used in our numerical experiments.

\paragraph{Motivation.}
The minimization in \eqref{eq:ofu} involves solving a bilinear optimization problem. In view of \eqref{eq:C_t} and \eqref{eq:safeset} it is not hard to show that \eqref{eq:ofu} can be equivalently expressed as follows:
\begin{align*}
\tilde \mu _t ^ \dagger x_t\,=\,&\min_{x}~\hat{\mu}_t^\dagger x - \beta_t\,\|x\|_{A_t^{-1}}\quad {\rm sub. to}~~~ \hat{\mu}_t^\dagger B x + \beta_t\,\|Bx\|_{A_t^{-1}} \leq c,~~x\in\Dc_0~.
\end{align*}
This is a non-convex optimization problem. Thus, we present a variant of \SUCB~(and its analysis) and we show that it can be efficiently implemented (particularly so, when the decision set is a polytope)  \cite{Dani08stochasticlinear}. We use this variant in our simulation results (see Appendix \ref{sec:sim}). 

\paragraph{Algorithm and guarantees.} We adapt the procedure first presented in \cite{Dani08stochasticlinear} to our new \SUCB~algorithm. The pure-exploration phase of the algorithm remains unaltered. In the safe exploration-exploitation phase, the only thing that changes is the definition of the confidence region in Line \ref{lineCt} in Algorithm \ref{algo:Safe-LUCB}. Specifically, we 
define the modified $\ell_1$-confidence region as follows:
\begin{align}\label{eq:Cct}
    \Cc_t^{\ell_1} := \{v \in \mathbb R^d:\|v-\hat \mu _t\|_{A_{t,1}}\leq \beta_t\sqrt{d}\}.
    \end{align}
Note that for any $v \in \Cc_t$ and all $t>0$, $\|A_t^{1/2}(v-\hat \mu _t)\|_1 \leq \sqrt{d}\|A_t^{1/2}(v-\hat \mu _t)\|_2\leq\sqrt{d}\beta_t$. Thus,
$
    \Cc_t \subseteq \Cc_t^{\ell_1}, \quad \forall t>0.
$
From this and Theorem \ref{thm:confidence_ball}, we conclude $\Pr(\mu \in   \Cc_t^{\ell_1}, \forall t>0)\geq 1-\delta$. Then, the natural modification of \eqref{eq:ofu} becomes
\begin{align} 
\tilde \mu _t ^ \dagger x_t =& \min_{x\in \Dts , v \in \Cc_t^{\ell_1}} v^\dagger x
=\min_{v \in \Cc_t^{\ell_1}}~~f(v) \label{eq:ofu1},
\end{align}
where
\begin{align}\label{eq:in1}
f(v):=\min_{\substack{x\in\Dc_0\\ \hat{\mu}_t^\dagger B x +\sqrt{d} \beta_t\,\|Bx\|_{A_t^{-1}} \leq C}}~~\nu^\dagger x.
\end{align}

From these, it is clear that all the results and theorems can be directly applied to the modified algorithm which uses $\ell_1$-confidence region in \eqref{eq:Cct}, with $\beta_t\sqrt{d}$ instead of $\beta_t$. As noted in \cite{Dani08stochasticlinear}  the regret of the modified algorithm does not optimally scale with the dimension $d$ (since there is an extra factor of $\sqrt{d}$ introduced by the substitution $\beta_t\leftarrow\beta_t\sqrt{d}$). However, as explained next, solving \eqref{eq:ofu1} is now computationally tractable.

\paragraph{On computational efficiency.}
Note that the minimization in \eqref{eq:in1}
 is a convex program that can be efficiently solved for fixed $\nu$. In particular, if $\Dc_0$ is a polytope then the minimization in \eqref{eq:in1} is a quadratic program.  Moreover, note that $f(v)$ is positive homogeneous of degree one, i.e., $f(\theta v) = \theta f(v)$ for any $\theta\geq 0$. Therefore, in order to solve \eqref{eq:ofu1} it suffices to evaluate the function $f(v)$ at the $2d$ vertices $v_1,\ldots,v_{2d}$ of $\Cc_t^{\ell_1}$ in \eqref{eq:Cct} and choose the minimum $f_{\rm min}:=\min_{v_i,~i\in[2d]} f(v_i)$. In order to see this, let $v^*\in\arg\min_{v\in\Cc_t^{\ell_1}} f(v)$ and $\theta_1,\ldots,\theta_{2d}\geq 0, \sum_{i=1}^d\theta_i=1$ such that $v^*=\sum_{i=1}^{2d}\theta_i v_i$. Then,
 \begin{align}
\min_{v\in\Cc_t^{\ell_1}} f(v) = f(v^*) =  \sum_{i=1}^{2d}\theta_i f(v_i) \geq  f_{\rm min} \sum_{i=1}^{2d}\theta_i = f_{\rm min} \geq \min_{v\in\Cc_t^{\ell_1}} f(v). \nn
 \end{align}
 Thus,
\begin{align}\label{eq:2d}
 \min_{v\in\Cc_t^{\ell_1}} f(v) = \min_{v_i,~i\in[2d]} f(v_i).
\end{align}
 To sum up, we see from \eqref{eq:2d} that solving \eqref{eq:ofu1} amounts to solving $2d$ quadratic programs (when $\Dc_0$ is a polytope).
 

\section{On GSLUCB} \label{algo:GSLUCB}

Having no knowledge of the safety gap $\Delta$,  GSLUCB starts conservatively by setting the length of the pure exploration phase  to its largest possible value, which is equal to $T_0$ defined in Theorem \ref{thm:worst-case} (corresponding to $\Delta = 0$). The idea behind GSLUB is to generate at each round $t$ of the pure-exploration phase a certain value $\Delta_t$ that serves as a lower bound for the unknown safety gap $\Delta$. We discuss possible ways to do so next, but for now let us describe how these  lower estimates of $\Delta$ can be useful. Owing to the result of Theorem \ref{thm:regretdeltanotzero}, at each round $t$, GSLUCB computes a pure exploration duration $T'_t=T_{\Delta_t}$, which is associated with the lower confidence bound $\Delta_t$ (Eqn. \eqref{eq:T0} for $\Delta=\Delta_t$). If at some round $t$, the computed $T'_t$ becomes less than $t$, then Theorem \ref{thm:regretdeltanotzero} guarantees that $x^* \in \Dts$ and the algorithm switches to the exploration-exploitation phase. 


One way to compute the $\Delta_t$'s that guarantees $\Delta_t\leq \Delta$ is as follows.  For each vector $v\in\Cc_t$ denote $x^* _v\in \argmin_{x\in \Ds(v)} v^\dagger x$, where $\Ds(v):=\{x\in \Dc_0: v^\dagger B x \leq c\}$ and define
\begin{equation}\Delta_t := \min_{v \in C_t} \Delta_v, \label{lcbdelta}\end{equation}
where $\Delta_v := c - v^\dagger B x^*_v$. Since $\mu\in\Cc_t$ with high probability (cf. Theorem \ref{thm:confidence_ball}) and by definition of $\Delta$, it can be seen that $\Delta_t\leq \Delta$. Unfortunately, solving \eqref{lcbdelta} can be challenging and, in general, one has to resort to relaxed versions of the optimization involved, but ones that guarantee $\Delta_t\leq \Delta$ (at least after a few rounds). We leave the study of this general case to future work and we discuss here a special case in which this is possible. We have implemented this special case in the simulation results presented in Figure \ref{fig:regret,karmed} (see Appendix \ref{sec:sim}). Specifically, we consider a finite $K$-armed linear bandit setting with feature vectors denoted by $y_1,\ldots,y_K$. 
We produce lower estimates $\Delta_{t}$ as follows. For all $i \in [K]$, we form the following two sets. (i) The set $\Cc_t^{i}=\{v\in \Cc_t~|~v^\dagger B y_i\leq c\}$ of all vectors in the confidence region for which the action $y_i$ is deemed safe; (ii) The set $\mathcal{Y}_t^i=\{y_j, j \in [K]~|~ \max_{v\in \Cc_t^{i}} v^\dagger B y_j \leq c\}$ of all actions that are considered safe with respect to all $v\in\Cc_t^{i}$. Then, we define
\begin{align}
\Delta_t^i := \min_{\substack{v\in\Cc_t^{i}\\ v^\dagger y_i \leq v^\dagger y,\:\text{for all}\: y\in \mathcal{Y}_t^i}}~~c-v^\dagger B y_i.
\end{align}
It can be checked that $\min_{i\in[K]} \Delta_t^i\leq \Delta$. Thus we rely on $ \min_{i\in[K]} \Delta_t^i$ as our lower confidence bound on $\Delta$. Note that computing $ \min_{i\in[K]} \Delta_t^i$ is computationally tractable for finite $K$ and an $\ell_1$ confidence region.

\section{Simulation Results}\label{sec:sim}
In this section, we provide the details of our numerical experiments. 
In view of our discussion in Appendix \ref{sec:L1}, we implement a modified version of \SUCB~  which uses 1-norms instead of 2-norms  (as in \cite{Dani08stochasticlinear}; see also Appendix \ref{sec:L1} for details). We have taken $\delta = 0.01$, $\la=1$, and $R = 0.1$ in all cases. 

\begin{figure}
     \centering
     \begin{subfigure}[b]{0.45\textwidth}
         \centering
         \includegraphics[width=\textwidth]{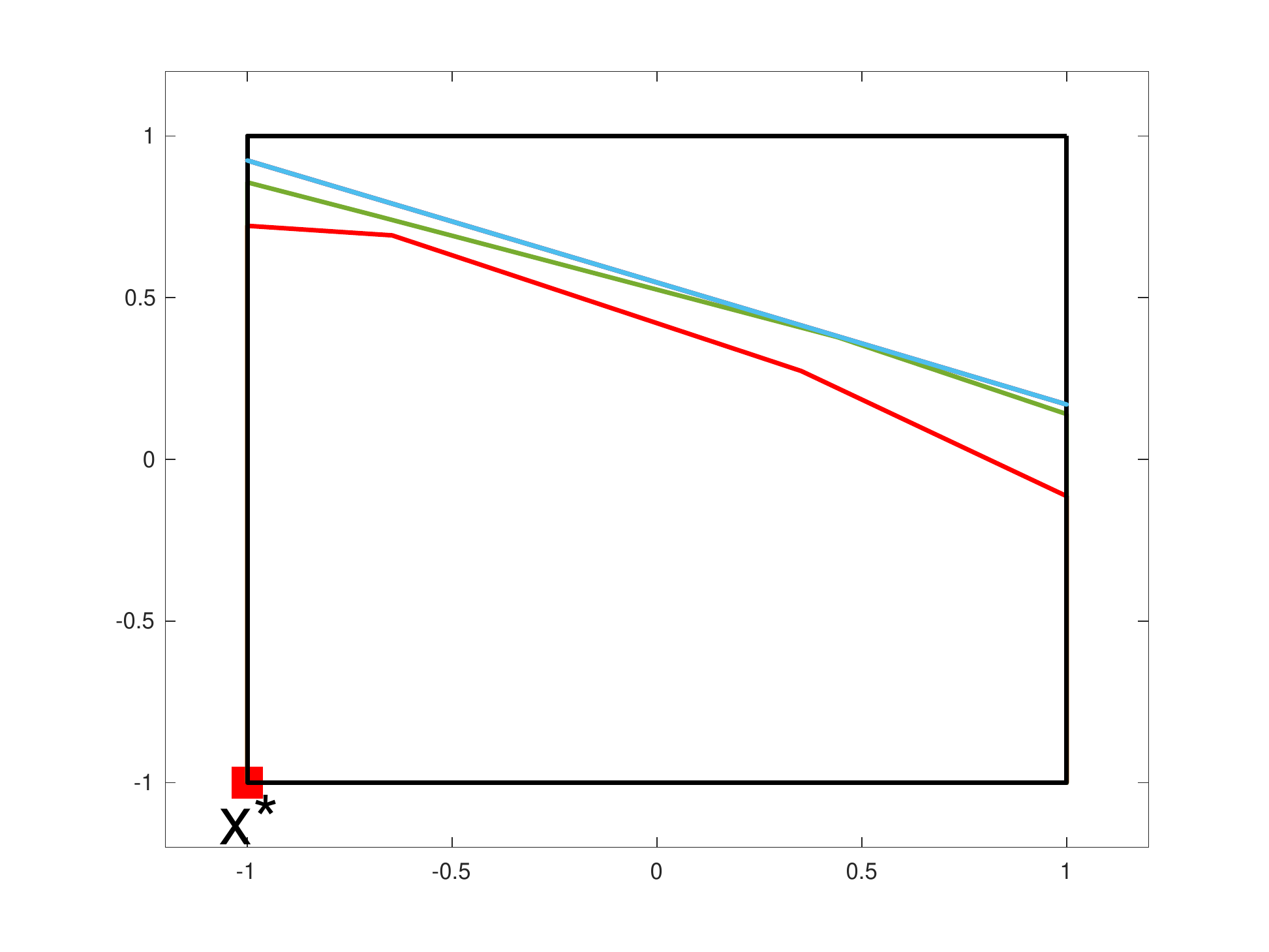}
         \caption{\SUCB~ with pure exploration phase.}
         \label{fig:noexplore}
     \end{subfigure}
    \hfill 
     \begin{subfigure}[b]{0.45\textwidth}
         \centering
         \includegraphics[width=\textwidth]{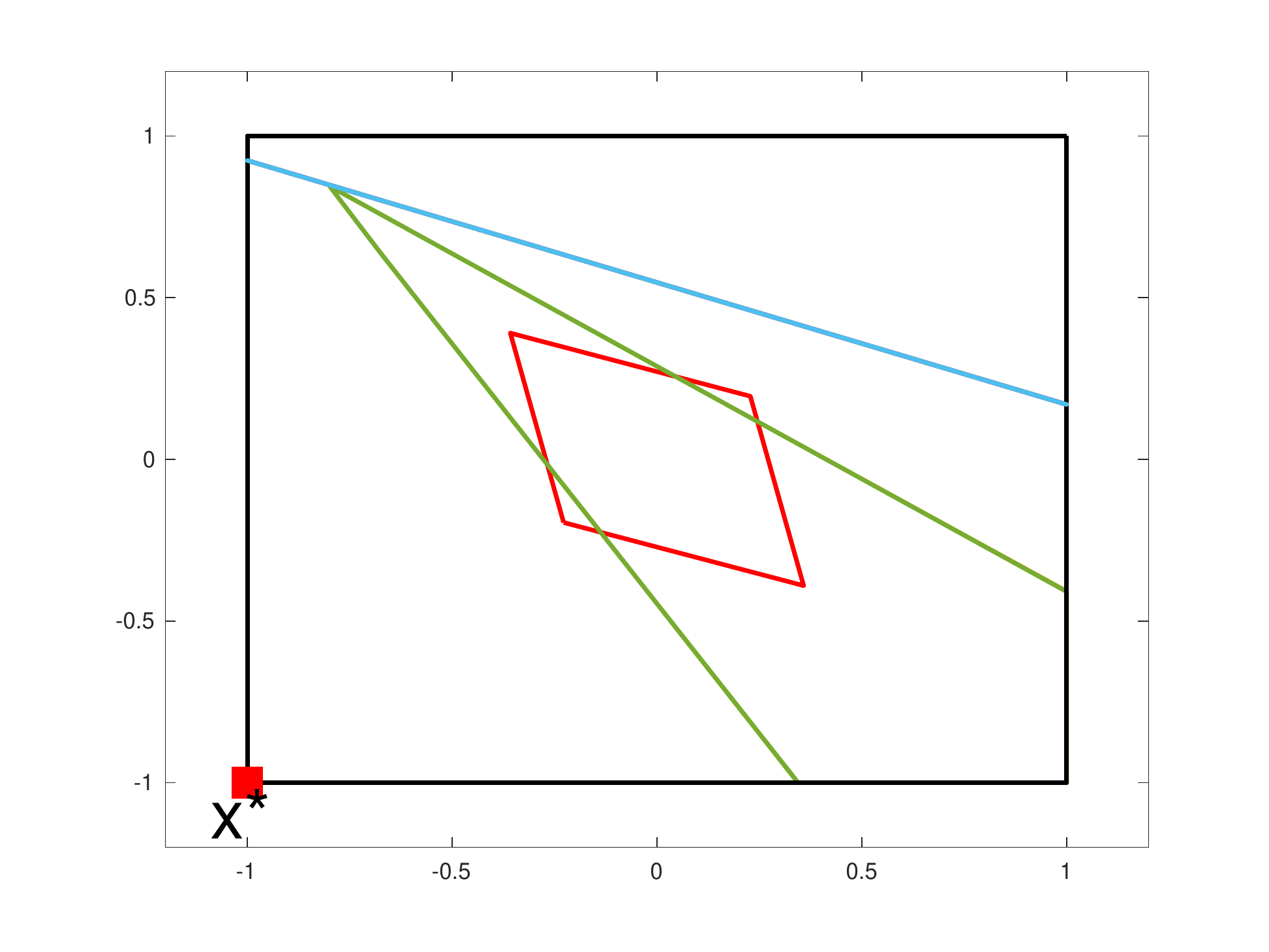}
        \caption{\SUCB ~without pure exploration phase}
         \label{fig:sucb}
     \end{subfigure}
        \caption{Growth of $\Dts$~ with and without pure exploration phase. In both figures: $\Dc_0$ (in black) $\Ds$ (in blue), $\Dc^{S}_{T'+1}$ (in red), $\Dc^{S}_{5e4}$ (in green). Also, shown the optimal action $x^*$. Note that $x^*\in\Dc^{S}_{T'+1}$ when pure exploration phase is used as suggested by Lemma \ref{lem:x^*in}.}
        \label{fig:dsafe}
\end{figure}


\begin{figure} 
     \centering
     \begin{subfigure}[b]{0.3\textwidth}
         \centering
         \includegraphics[width=\textwidth]{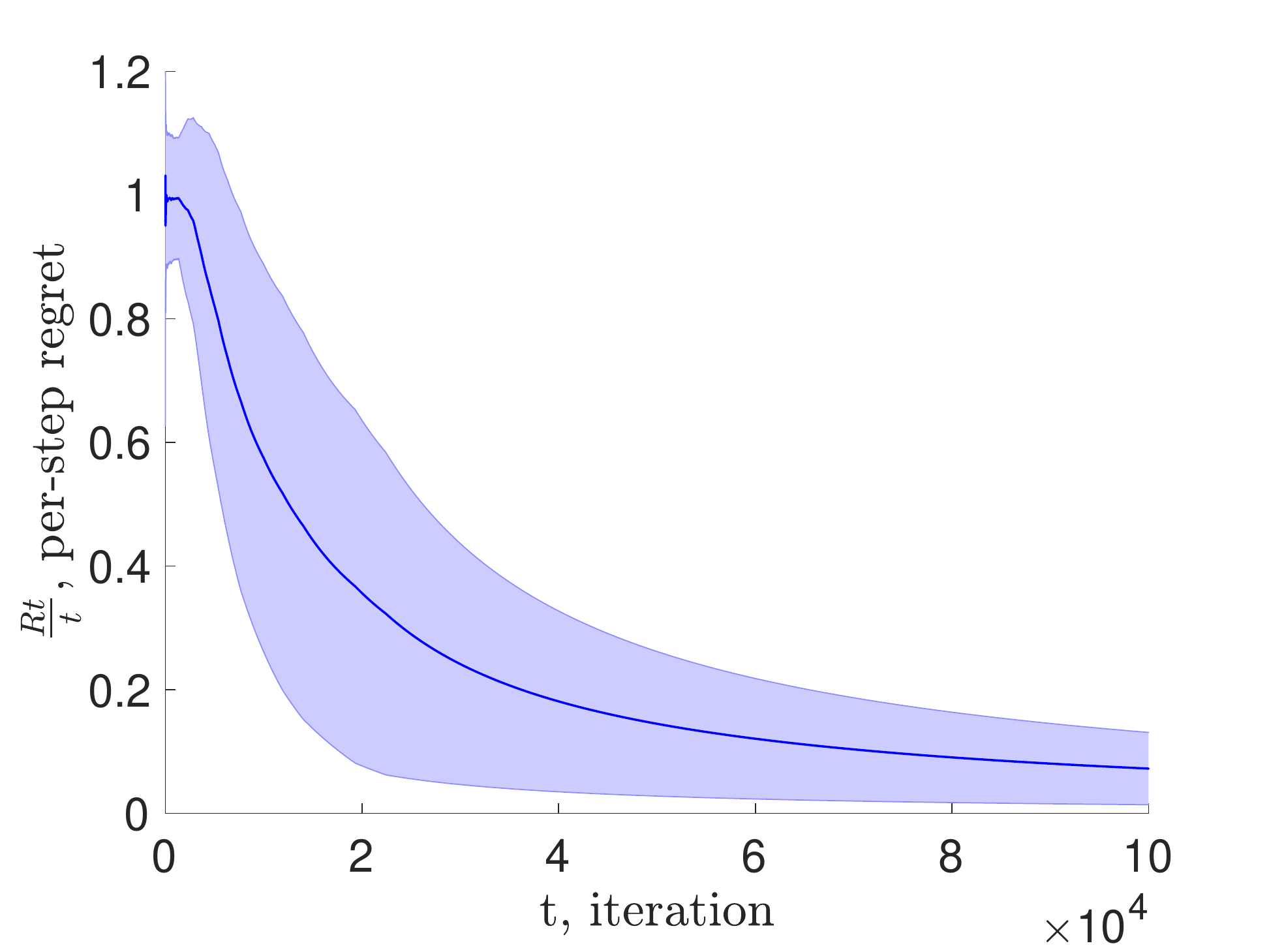}
         \caption{\SUCB,~ $T' = T_{\Delta}$}
         \label{fig:tdeltaerr}
     \end{subfigure}
     \begin{subfigure}[b]{0.3\textwidth}
         \centering
         \includegraphics[width=\textwidth]{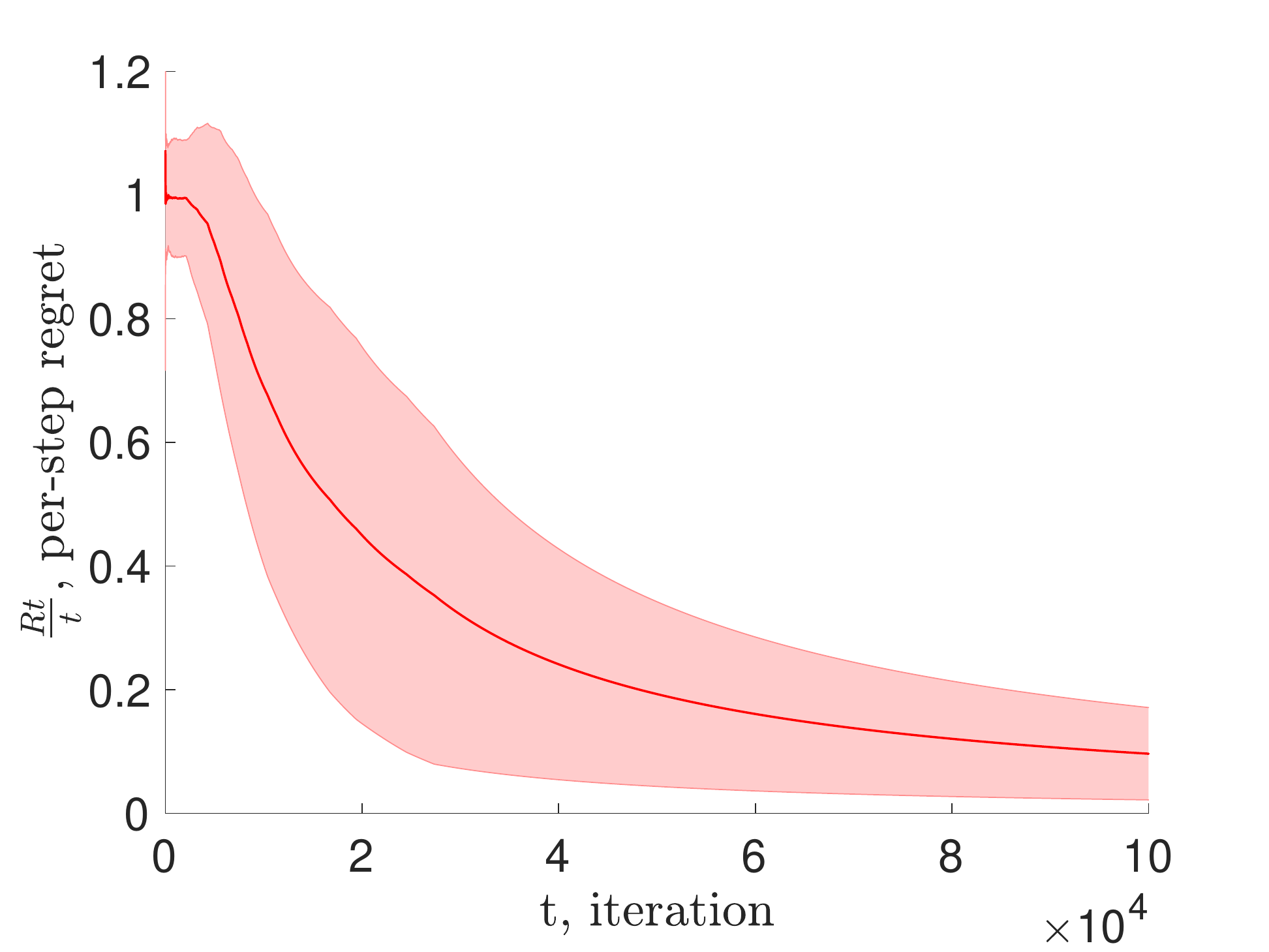}
        \caption{GSLUCB}
         \label{fig:sucb}
     \end{subfigure}
      \begin{subfigure}[b]{0.3\textwidth}
         \centering
         \includegraphics[width=\textwidth]{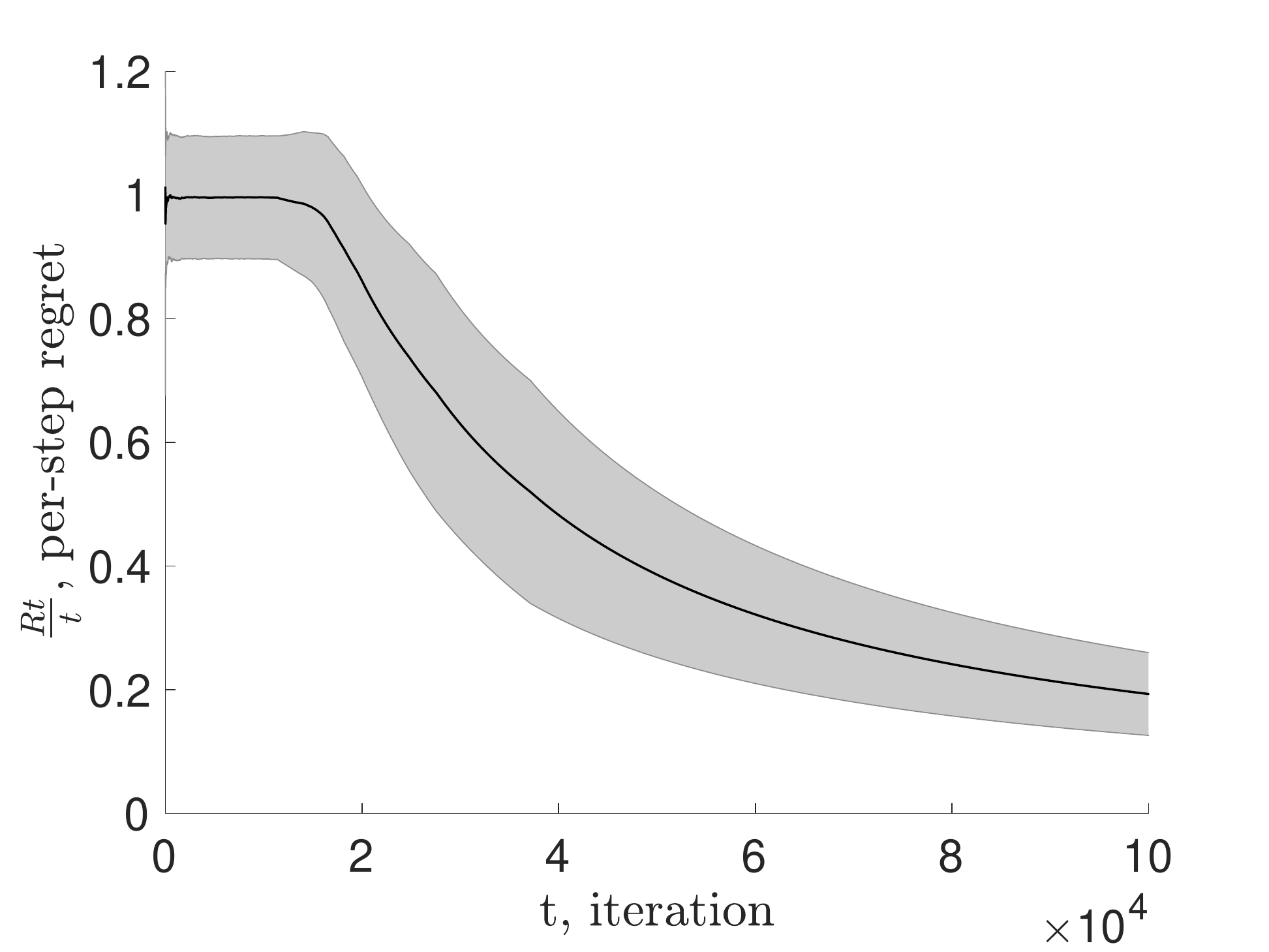}
        \caption{\SUCB, ~$T' = T_0$}
         \label{fig:sucb}
     \end{subfigure}
        \caption{Comparison of mean per-step regret for \SUCB ($T' = T_{\Delta}$), GSLUCB, and \SUCB ($T' = T_0$). The shaded regions show one standard deviation around the mean. The results are averages over 20 problem realizations.}
        \label{fig:errbar}   
\end{figure}

Figure \ref{fig:regret,karmed} compares the average per-step regret of 1) \SUCB  ~with knowledge of $\Delta$; 2) \SUCB  ~without knowledge of $\Delta$ (hence, assuming $\Delta =0$); 3) GSLUCB without knowledge of $\Delta$ (the algorithm creates a lower confidence bound for $\Delta$ as the pure exploration phase runs). Figure \ref{fig:errbar} highlights the sample standard deviation of regret around the average per-step regret for each of the above-mentioned cases.
 We  considered a time independent decision set of 15 arms in $\mathbb{R}^4$  such that 5 of the feature vectors are drawn uniformly from $\Dc^w$ and the other 10 are drawn uniformly from unit ball in $\mathbb{R}^4$. Moreover, $\mu$ is drawn from $\mathcal{N}(0,I_4)$ and then normalized to unit norm. $B$ and $c$ are drawn uniformly from $[0,0.5]^{4\times4}$ and [0,1] respectively. The results shown depict averages over 20 realizations. It can be seen from the figure that GSLUCB performs significantly better than the worst case suggested by Theorem \ref{thm:worst-case} (aka \SUCB~assuming $\Delta =0$). In fact, it appears that it approaches the improved regret performance suggested by Theorem \ref{thm:regretdeltanotzero} of \SUCB~ with knowledge of $\Delta$.  

Our second numerical experiment serves to showcase the value of the safe exploration phase as discussed in Section \ref{sec:Delta>0}. We focus on an instance with positive safety gap $\Delta>0$ to verify the validity of Lemma \ref{lem:x^*in}, namely that $x^*\in\Dts$ for $t\geq T'+1$, when $T'$ is appropriately chosen. Furthermore, we compare the performance with a ``naive" variation of \SUCB~that only implements the safe exploration-exploitation phase (aka, no pure exploration phase). The regret plots of the two algorithms (with and without pure exploration phase) shown in Figure  \ref{fig:regret,polytope} clearly demonstrate the value of the pure exploration phase for the simulated example.  
%
Specifically, for the simulation, we consider a horizon $T = 100000$ with decision set $\Dc_0$ the unit $\ell_{\infty}$-ball in $\mathbb{R}^2$, and, the following parameters: $\mu=\begin{bmatrix}
    0.9 \\0.044 \end{bmatrix}$, $B= \begin{bmatrix}
    0.6& 1.8  \\
    1.8 & 0.4 
  \end{bmatrix}$, $c = 0.9$.  
  We have chosen a low-dimensional instance, because we find it instructive to also depict the the growth of the safe sets for the two algorithms. This is done in Figures \ref{fig:noexplore} and \ref{fig:sucb}, where we illustrate the safe sets of \SUCB~ with and without pure exploration phase, respectively.  Black lines denote the (border of) the polytope $\Dc_0$; blue lines denote the linear constraint in \eqref{eq:constraint}; red lines denote the (border of) $\Dc_{T'+1}^{\text{s}}$, where $T' = T_{\Delta} = 1054$ and $T' = 0$ for  Figures \ref{fig:noexplore} and \ref{fig:sucb}, respectively; and, green lines denote (the border of) safe sets $\Dc_{50000}^{\text{s}}$ at round 50000. Also depicted the optimal action $x^*$ with coordinates $\{-1,-1\}$. As expected, \SUCB ~starts the exploration-exploitation phase with a safe set that  includes $x^*$ while, without the pure exploration phase, the algorithm  starts the exploration-exploitation phase with a smaller safe set which does not include $x^*$ and as a results, fails in expanding the safe set to include $x^*$ even after $T = 50000$ rounds. This results in the bad regret performance in Figure \ref{fig:regret,polytope}.

%
%
%

                  






\end{document}